\documentclass[a4paper,11pt]{article}

\usepackage[top=2.5cm, bottom=2.5cm, left=2.5cm, right=2.5cm]{geometry}

\usepackage[T1]{fontenc}
\usepackage[utf8]{inputenc}
\usepackage[mono=false]{libertine} 
\usepackage{microtype}            

\PassOptionsToPackage{hyphens}{url}
\usepackage[colorlinks = true,
            linkcolor = OliveGreen,
            urlcolor  = blue,
            citecolor = MidnightBlue,
            anchorcolor = blue]{hyperref}\usepackage{url}            
\usepackage{booktabs}       
\usepackage{amsfonts}       
\usepackage{nicefrac}       
\usepackage{microtype}      
\usepackage{enumitem}        
\usepackage{dsfont}

\usepackage{authblk}

\def\Xint#1{\mathchoice
   {\XXint\displaystyle\textstyle{#1}}%
   {\XXint\textstyle\scriptstyle{#1}}%
   {\XXint\scriptstyle\scriptscriptstyle{#1}}%
   {\XXint\scriptscriptstyle\scriptscriptstyle{#1}}%
   \!\int}
\def\XXint#1#2#3{{\setbox0=\hbox{$#1{#2#3}{\int}$}
     \vcenter{\hbox{$#2#3$}}\kern-.5\wd0}}

\def\dashint{\Xint-}

\usepackage{amsmath}
\usepackage{amssymb}
\usepackage{mathtools}
\usepackage{empheq} 
\usepackage{amsthm}
\usepackage{bbm}
\usepackage{comment}
\usepackage[dvipsnames]{xcolor}
\usepackage[capitalize]{cleveref}
\usepackage{settings}

\usepackage{csquotes} 
\usepackage[backref=true,sorting=none,doi=false,isbn=false,giveninits=true,style=nature,url=false]{biblatex} 
\newcommand{\citet}[1]{\textcite{#1}}
\newcommand{\citep}[1]{\cite{#1}}

\theoremstyle{plain}
\newtheorem{theorem}{Theorem}[section]

\newtheorem{lemma}{Lemma}

\newtheorem{conjecture}{Conjecture}
\newtheorem{claim}{Claim}

\theoremstyle{definition}

\theoremstyle{remark}

\title{Factual recall in linear associative memories: \\sharp asymptotics and mechanistic insights}

\author[1]{Alessio Giorlandino\thanks{agiorlan@sissa.it}}
\author[1]{Sebastian Goldt\thanks{sgoldt@sissa.it}}
\author[2]{Antoine Maillard\thanks{antoine.maillard@inria.fr}}

\affil[1]{International School of Advanced Studies (SISSA), Trieste, Italy}
\affil[2]{INRIA Paris \& DI ENS, PSL University, Paris, France}

\date{\today}
\begin{document}

\newpage 
\thispagestyle{empty}
\newpage
\setcounter{page}{1}

\maketitle

\begin{abstract}
  \noindent Large language models demonstrate remarkable ability in factual recall, yet the
fundamental limits of storing and retrieving input--output associations with
neural networks remain unclear. We study these limits in a minimal setting: a
linear associative memory that maps $p$ input embeddings in $\mathbb{R}^d$ to
their corresponding~$d$-dimensional targets via a single layer, requiring each
mapped input to be well separated from all other targets. Unlike in supervised
classification, this strict separation induces~$p$ constraints per association
and produces strong correlations between constraints that make a direct
characterisation of the storage capacity difficult. Here, we provide a precise
characterisation of this capacity in the following way. We first introduce a
decoupled model in which each input has its own independent set of competing
outputs, and provide numerical and analytical evidence that this decoupled model
is equivalent to the original model in terms of storage capacity, spectra of the
learnt weights, and storage mechanism. Using tools from statistical physics, we
show that the decoupled model can store up to $p_c \log p_c / d^2 = 1 / 2$
associations, and generalise the computation of $p_c$ to linear two-layer architectures. Our analysis also gives mechanistic insight into how the optimal
solution improves over a naïve Hebbian learning rule: rather than boosting
input-output alignments with broad fluctuations, the optimal solution raises the
correct scores just above the extreme-value threshold set by the competing
outputs. These findings give a sharp statistical-physics characterisation of
factual storage in linear networks and provide a baseline for understanding the
memory capacity of more realistic neural architectures.


\end{abstract}

\begin{figure*}[t]
    \centering
    \includegraphics[width=\linewidth]{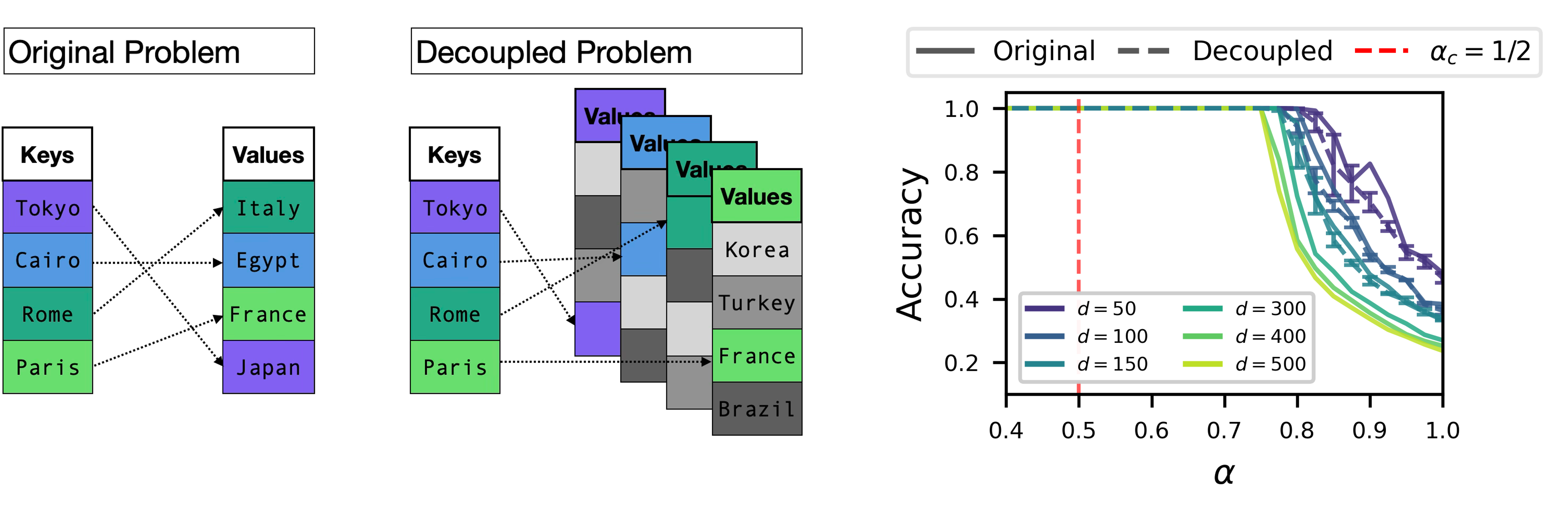}
    \vspace*{-3em}
    \caption{    \label{fig:figure-1} \textbf{Left:} The task is to memorise associations between inputs (keys) and outputs (values), illustrated here as cities and their countries. In the \emph{original problem}, all inputs share a common set of outputs. In the \emph{decoupled problem}, each input has its own independent set of competing outputs.
    \textbf{Right:} Empirical accuracy as a function of the load parameter
    $\alpha = p \log p / d^2$, shown for the original problem~\eqref{eq:OP}
    (solid), the decoupled problem~\eqref{eq:DP} (dashed), across several embedding dimensions $d$. The
    original and decoupled problems appear consistently equally hard, suggesting
    a common capacity threshold. Models are trained with Adam on the cross-entropy loss (see
    \cref{sec:experimental-details}). The asymptotic
    threshold $\alpha_c$ for the decoupled problem is predicted analytically in
    \cref{sec:replica_results}, where we also comment on the slow convergence to
    the high-dimensional limit due to finite-size effects of order $O((\log
    p)^{-1})$.  }
\end{figure*}

\section{Introduction}

Large language models have a remarkable capacity to memorise factual associations from their training data \citep{petroni-etal-2019-language}. This observation has motivated a growing body of work aimed at localising and quantifying memorised facts within such models \citep{geva2021transformer,meng2022locating}. These empirical findings also raise fundamental theoretical questions, among them: how many distinct associations can neural networks reliably store, and what are the absolute limits governing factual memory in modern learning systems \citep{roberts2020much,allenzhu2024physicslanguagemodels33}?

In this paper, we seek to understand the fundamental limits of learning a set of associations with a simple neural network. We illustrate the task on the left of \cref{fig:figure-1}: the task is to learn the mapping of $N$ input tokens to $M$ output tokens according to some unknown ground-truth rule $f^*: [N] \to [M]$. We are given embeddings $\{e_x\}_{x \in [N]} \subset \bbR^d$ and $\{u_y\}_{y \in [M]} \subset \bbR^d$ for the input and output vocabularies respectively, and the goal is to learn a parametric model $F_W: \bbR^d \to \bbR^d$ 
that, given $e_x$, correctly associates the input to the corresponding output token, via the rule $\arg\max_{y \in [M]} u_y^\top F_W(e_x) = f^*(x)$. This setting was first introduced by \citet{cabannes2023scaling}, and the simplest instance, which we refer to as \emph{linear associative memory}, consists in taking $F_W$ as a linear map, i.e. $F_W(e_x) = We_x$ with $W \in \bbR^{d \times d}$. 
This simple setting, which we consider in our work, already exhibits some of the core phenomenology of associative memories, 
and is the subject of a very active line of research \citep{cabannes2023scaling, cabannes2024learning, nichani2024understanding,vural2026learning,kim2026sharpcapacityscalingspectral}.


More precisely, \citet{cabannes2023scaling,cabannes2024learning} study scaling laws and training dynamics in the regime where the number of input patterns $N$ scales with the embedding dimension $d$, while the number of possible outputs is $M = O(1)$, which is closer to a typical classification setting with high-dimensional inputs and a finite number of classes. \citet{vural2026learning} and \citet{kim2026sharpcapacityscalingspectral} study the early-stage learning 
dynamics of linear associative memories with various models for the embeddings, but do not address the fundamental capacity limits of these architectures. 
To the best of our knowledge, \citet{nichani2024understanding} were the first to study the capacity of associative memories in the more realistic case of a large number of output tokens $M$ in a \emph{high-dimensional limit}, where we have an injective rule $f^\star$, Gaussian input and 
output embeddings with dimension $d$, and a number of associations $p \coloneqq N = M$. They show that if $d^2 \gtrsim p \operatorname{polylog}(p)$, then with high probability over the draw of the embeddings, the \emph{Hebbian ansatz} for the weight matrix $W_\Hebb \coloneqq \sum_{z \in [p]} u_{f^*(z)} e_z^\top$ satisfies all the association constraints, and that it can be obtained from a single step of gradient descent on the correlation loss.
They also provide numerical results for the capacity achieved by the minimiser of the cross-entropy loss, which suggest that the storage capacity transition occurs in the regime $d^2 \sim p \log p$.

In this work, we consider the same setting as~\citet{nichani2024understanding}
and crucially go beyond the Hebbian ansatz. Our goal is to provide sharp
thresholds for the maximum number of associations that can be stored as a
function of the embedding dimension $d$. Since we are interested in the optimal
weights~$W$ beyond a specific ansatz, we have to characterise the volume of the
space of weights that satisfy a given set of associations. This perspective is
inspired by classical works on the storage capacity of single-layer feed-forward
neural networks for binary classification, see~\citet{gardner:hal-03285587} and
the further references discussed below. Compared to storing binary labels for a
set high-dimensional inputs in a single neuron, the associative memory problem
raises significant new challenges. Rather than assigning binary labels, each
input must be associated with one among many candidate outputs, and the
learnable parameters form a weight matrix rather than a weight vector. The main
difficulty in determining the optimal storage capacity is due to the strong
correlations between constraints that arise from different associations; these
correlations make an exact analytical characterisation challenging. 

\textbf{Our main contribution} is the introduction of a ``decoupled'' version of the storage problem, which allows us to show that linear associative memories have a sharp capacity threshold of $p_c \log p_c / d^2 = 1/2$, and that this limit is achieved by a storage mechanism that is fundamentally different than the Hebbian construction.

More precisely, our main contributions are as follows:
\begin{enumerate}[leftmargin=*, topsep=2pt, itemsep=2pt, parsep=2pt]
    \item \textbf{Decoupled formulation:} 
    We introduce a decoupled variant of the associative memory problem in which the constraints are independent in \cref{sec:conjecture}, see \cref{fig:figure-1} for an illustration.
    \item \textbf{Evidence for the equivalence}:
    We conjecture that this decoupled variant behaves in the high-dimensional limit equivalently to the original associative memory, based on three strands of evidence:
    \begin{enumerate}[label=$( \roman* )$, leftmargin=*, topsep=1pt, itemsep=1pt, parsep=2pt]
    \item Both models exhibit the same capacity, see \cref{sec:evidence-thresholds} and \cref{fig:accuracy-equivalence};
    \item The optimal solution in both models has the same asymptotic singular value distribution, see \cref{sec:evidence-spectra} and \cref{fig:spectra};
    \item Both models exhibit the same storage mechanism, see \cref{sec:evidence-scores} and \cref{fig:hist}.
    \end{enumerate}
    \item \textbf{Mechanistic insights}: We explain how the optimal solution beats the Hebbian solution by boosting the correct output scores just above a deterministic threshold, see \cref{sec:mechanistic-insights}.
    \item \textbf{Sharp capacity characterisation:} Using analytic techniques from statistical physics, 
    we derive an exact expression for the optimal storage capacity of the linear associative memory in terms of the load parameter $\alpha \coloneqq p \log p / d^2$ (recall $p$ is the number of associations and $d$ the embedding dimension) 
    and show good agreement with numerical simulations, see \cref{sec:replica_results}.
    \item \textbf{Two-layer (rank-constrained) case:} We study matrices $W = U V^\T$ with $U, V \in \bbR^{d \times m}$, corresponding to two linear layers with a smaller hidden dimension of size $m = \kappa d$. 
    For generic $\kappa \in (0, 1]$, we derive the sharp capacity threshold, generalising the full-rank case, and quantify how the inner layer size affects the memorisation capacity. 
    see \cref{subsec:capacity_rank_constrained}.
    We also give a theoretical prediction for the singular value distribution of the optimal solution close to capacity, showcasing excellent agreement with numerical simulations, see~\cref{fig:spectra}.
\end{enumerate}
Collectively, our results offer the first precise characterisation of the capacity of linear associative memories in the high-dimensional regime, providing new insights into the mechanisms and the fundamental limits of factual memory in neural networks.

\subsection*{Further related work} 


\paragraph{Storage capacity in supervised learning} A classical line of work characterises the storage capacity of neural networks. The capacity of a perceptron, a single neuron used for binary classification, can be sharply characterised and is linear in the input dimension, $p \sim d$, for a variety of rules~\citep{covergeometrical, gardner:hal-03285587,krauth1989storage}. More recently, non-convex variants of the problem have been studied in both statistical physics~\citep{franz2017universality} and probability theory~\citep{montanari2024tractability, huang2024capacity}. In factual recall, rather than assigning binary labels, the learnable parameters form a weight matrix rather than a vector, meaning that in our setting the maximal number of associations $p$ scales with the embedding dimension as $d^2 \sim p \log p$. The quadratic dependence reflects the $d^2$ degrees of freedom of the matrix~$W$, while the $\log p$ factor arises because each constraint involves an optimisation over $p$ competing outputs, and will naturally emerge from our derivation.

\paragraph{Storage capacity of associative memories} The canonical example of an auto-associative memory are Hopfield networks \citep{hopfield1982neural}, where the goal is to store a number of patterns as fixed points of the network dynamics. The Hopfield model can store $p \simeq 0.14 d$ patterns, a result that was first obtained using techniques from statistical physics by \citet{amit1985spin}. More recently, it was shown that \emph{dense} associative memories with non-polynomial activation functions can store an exponential number of patterns~\citep{krotov2016dense, 
Demircigil_2017, krotov2020large, ramsauer2021hopfieldnetworksneed}; 
notably, \citet{lucibello2024capacity} provided sharp asymptotics on the capacity using tools from statistical physics. Rather than simply storing patterns, here we instead seek to map inputs to 
distinct outputs as in classical correlation-matrix memories 
\citep{anderson1972simple, kohonen1972correlation, kosko1988bidirectional} 
and modern key--value retrieval architectures \citep{weston2014memory}.

\paragraph{Parallel work} This work was conducted in parallel with an independent study by \citet{barnfield2026sharpcapacitythresholdslinear} that also explores the optimal capacity of linear associative memories.
\citet{barnfield2026sharpcapacitythresholdslinear} prove that the capacity threshold scales as $d^2 \sim p \log p$ and obtain explicit bounds for the critical capacity on this scale.
Beyond the argmax objective, \citet{barnfield2026sharpcapacitythresholdslinear} also introduce a Tail-Average Margin relaxation in the quadratic regime $p \asymp d^2$.
For this convex listwise criterion, they derive a scalar variational characterisation that gives exact asymptotic predictions for the performance of the linear associative memory, together with an explicit phase transition in the ridgeless limit.
Their convex formulation also leads to a conjecture for the critical load parameter $\alpha = p \log p / d^2$ in the argmax setting, consistent with our predictions.
In our work, we take another approach, and instead argue that the mechanism behind memorisation remains intact when decoupling the different constraints.
This allows not only to establish the sharp capacity threshold, but also to characterise properties of the optimal memorisation matrix (such as its asymptotic spectrum) and to generalise our results to two-layer linear models.
Taken together, \cite{barnfield2026sharpcapacitythresholdslinear} and the present work shed new light on the mechanisms driving the memorisation capacity of linear associative memory models.

\section{A decoupled variant of associative memory}%
\label{sec:conjecture}

In this section, we recall the original associative memory problem in its precise form, following~\citet{cabannes2023scaling,nichani2024understanding}. We then propose a decoupled variant of the problem and conjecture that the two problems share the same storage capacity.

\paragraph{Original Problem (OP) --}
Let $E \coloneqq \{e_\mu\}_{\mu \in [p]} \subseteq \bbR^d$ and $U \coloneqq \{u_\rho\}_{\rho \in [p]} \subseteq \bbR^d$ denote sets of input and output embeddings, each drawn independently from a standard Gaussian distribution $\mathcal{N}(0, \Id_d)$. We are given $p$ associations specified by an injective mapping $f^* : [p] \to [p]$, yielding the paired set $\{(e_\mu, u_{f^*(\mu)})\}_{\mu=1}^p$. The objective is to learn a matrix~$W \in \bbR^{d \times d}$ such that, for every $\mu \in [p]$, the correct output vector achieves the highest score:
\begin{equation}
  (\mathrm{OP}): \quad \forall \mu \in [p], \quad
  \arg\max_{\rho \in [p]} \; u_\rho^\top W e_\mu \;=\; f^*(\mu).
  \label{eq:OP}
\end{equation}
Without loss of generality, we assume throughout the remainder of the manuscript that the associations are ordered, i.e., $f^*(\mu) = \mu$ for all $\mu \in [p]$. We focus on two classes of weight matrices: general $W \in \bbR^{d \times d}$, and the rank-constrained case $W = QR^\top$ with $Q, R \in \bbR^{d \times m}$ and hidden dimension $m = \kappa d$ for $\kappa \in (0,1]$, corresponding to a two-layer linear network~\citep{nichani2024understanding}.


\paragraph{Decoupled Problem (DP) --}
The difficulty in analysing the original problem of eq.~\eqref{eq:OP} directly is that the constraints are coupled across examples, 
in sharp contrast with classical perceptron-type models à la
\citet{gardner:hal-03285587}, where each input vector is
mapped to a binary label that is independent of all other labels and inputs. 
In contrast, in factual recall the model must select the correct target for each input vector
$e_\mu$ from the \emph{same} set targets $U$.
This observation motivates the study of a less correlated variant in which each input $e_\mu$ is associated with its own independent set of candidate outputs $U^{(\mu)}$.
In this \emph{decoupled} setting, which we formally define below, the competition set differs across inputs,
substantially decoupling the constraints and simplifying greatly the analytical
treatment, while preserving the essential structure of the associative
memory task.

As in the original problem, let 
$E \coloneqq \{ e_\mu\}_{\mu \in [p]}  \iid \mcN(0, \Id_d)$ denote a set of input vectors.
In contrast to the original problem, we introduce \emph{for each} $\mu \in [p]$ a set of output vectors
\[
U^{(\mu)} \coloneqq \{ u^{(\mu)}_\rho\}_{\rho \in [p]} \subseteq \bbR^d,
\]
where all vectors are drawn i.i.d.\ from $\mathcal{N}(0, \Id_d)$ and are
\emph{independent} across different values of $\mu \in [p]$. This modification is the only
change with respect to the original problem; the objective remains to learn a
matrix~$W \in \bbR^{d \times d}$ such that, for every input index $\mu$, the
designated output $u^{(\mu)}_\mu$ achieves the largest score among its local
candidate set:
\begin{equation}
(\DP): \quad \forall \mu \in [p], \quad
\arg\max_{\rho \in [p]} \; u^{(\mu)\top}_\rho W e_\mu \;=\; f^*(\mu) .
\label{eq:DP}
\end{equation}
As in the original problem, we assume without loss of generality that $f^*(\mu) = \mu$ for all $\mu \in [p]$.

On a technical level, we expect the decoupled problem to fall within the scope of Gaussian universality: each constraint depends on \(W\) only through bulk statistics of the weight matrix, so the high-dimensional behaviour is governed primarily by its covariance structure. This perspective underlies our derivation and aligns with prior work on universality in high-dimensional inference \citep{hu2022universality, maillard2023exact, goldt2022gaussian, montanari2022universality}, and we discuss this point further in~\cref{app:decoupled}. In contrast, the shared outputs in the original problem induce strong correlations between constraints, precluding such a reduction and rendering a similar analysis of \((\OP)\) intractable with these tools (see \cref{sec:Original-appendix}).

\paragraph{Equivalence between original and decoupled problems --} Our first main result is the following conjecture on the equivalence between the original and the decoupled problems. Here, we state the conjecture; in \cref{sec:evidence_conjecture}, we present different strands of evidence.
\begin{conjecture}[Capacity Equivalence]
\label{conjecture-capacity}
Assume that $d,p \to \infty$ with a fixed ratio
\begin{equation}
    \alpha \coloneqq \lim_{d \to \infty} \frac{p \log p}{d^2} \in (0, \infty).
    \label{eq:alpha}
\end{equation}
Let $\mcP \in \{(\OP), (\DP)\}$ denote either problem formulation.
For each $\mcP$, there exists a critical capacity threshold $\alpha_c^{\mcP}$ such that, with high probability over the random draw of the input and output vectors, there exists a solution $W^\star$ storing all associations if $\alpha < \alpha_c^{\mcP}$, whereas for $\alpha > \alpha_c^{\mcP}$ no such solution exists. 
Moreover we have:
\begin{equation*}
    \alpha_c^{\OP} = \alpha_c^{\DP} = \frac{1}{2}.
\end{equation*}
\end{conjecture}
\cref{conjecture-capacity} is supported by two sets of evidence. The equivalence between $(\DP)$ and $(\OP)$ is corroborated by numerical and analytical evidence presented below. The value $\alpha_c^{\DP} = 1/2$ is then established by a sharp analytical characterisation of the decoupled problem (\cref{claim:capacity}). Note that the critical capacity is higher than the critical capacity that can be achieved with a simple Hebbian rule (see \cref{sec:mechanistic-insights}), so establishing this threshold requires a new approach.

We expect the equivalence between the original and decoupled 
problems to extend to the rank-constrained setting, with a common capacity threshold 
$\alpha_c(\kappa)$ depending on the rank fraction $\kappa$. We support this 
numerically in \cref{sec:evidence_conjecture}, and we 
characterise the threshold analytically for the decoupled rank-constrained problem in 
\cref{claim:rank}.

\section{Evidence for \cref{conjecture-capacity}} %
\label{sec:evidence_conjecture}

\subsection{Evidence I: Retrieval accuracy and capacity threshold}
\label{sec:evidence-thresholds}

We first validate numerically that the original problem~$(\OP)$ and the decoupled problem~$(\DP)$ share the same critical capacity. In \cref{fig:accuracy-equivalence} (right), we show the empirical retrieval accuracy as a function of the load parameter $\alpha = 
p\log p / d^2$ for a linear model 
trained 
using the cross-entropy loss, for different rank fractions $\kappa$ and embedding dimensions $d$. For both~$(\OP)$ and~$(\DP)$, the 
accuracy undergoes a sharp transition from perfect memorisation (and an accuracy equal to 1) to an extensive number of violated constraints. The two transition points are indistinguishable across all 
dimensions tested, supporting the conjectured equivalence $\alpha_c^{\OP} = 
\alpha_c^{\DP}$ in terms of their critical capacity. 

We note that for the considered dimensions, the networks manage to store more
patterns than what is predicted by the high-dimensional theory: this is
consistent with the strong finite-size corrections of size $\tmcO(1 / \log p)$
predicted by our derivation (see~\cref{sec:F-computation}), which cause the
threshold to converge slowly to its asymptotic value. We validate
this slow convergence by a finite-size analysis in~\cref{fig:finite-size}. We
finally emphasise that the agreement between the two problems goes beyond the
capacity threshold: already in \cref{fig:figure-1}, the full curves of the
accuracy as a function of $\alpha$ match closely between $(\OP)$ and $(\DP)$,
not just at the transition point. The cross-entropy loss curves, shown
in~\cref{fig:loss}, further confirm this agreement.

\begin{figure}
    \centering
    \includegraphics[width=1\linewidth]{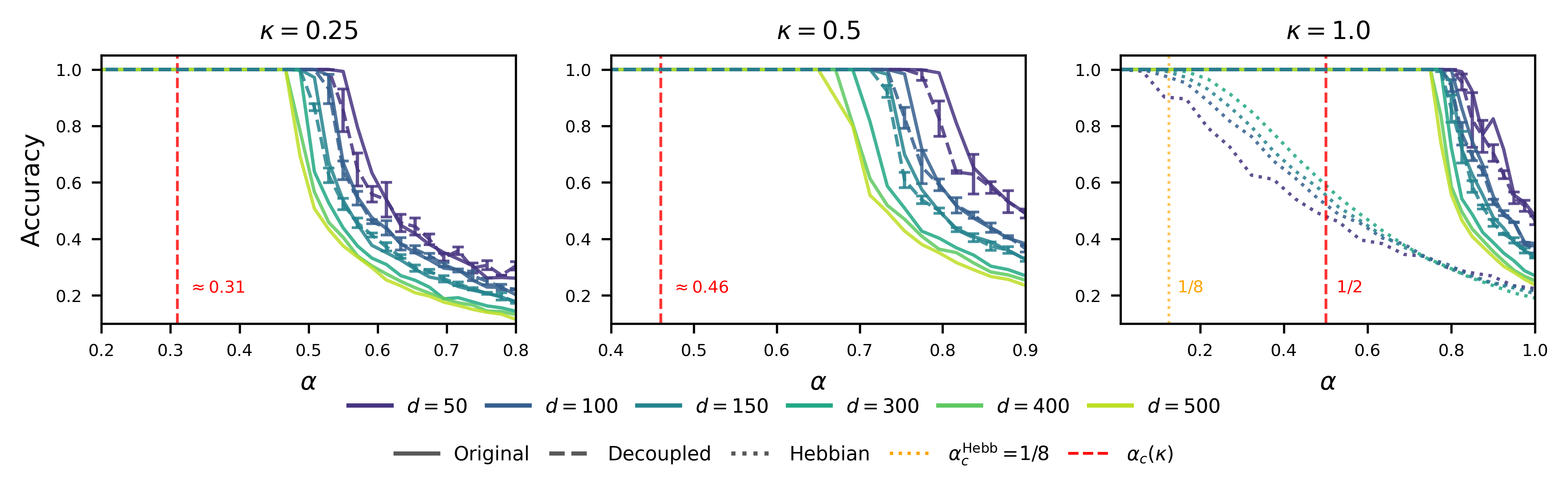}
    \caption{\label{fig:accuracy-equivalence} Empirical accuracy as a function
        of the load parameter $\alpha = p \log p / d^2$, for three rank
        fractions $\kappa \in \{0.25, 0.5, 1.0\}$. In each
        panel, curves correspond to the original problem~\eqref{eq:OP} (solid)
        and the decoupled problem~\eqref{eq:DP} (dashed), across several
        embedding dimensions $d$. In the full-rank case ($\kappa = 1$) the
        Hebbian ansatz (dotted) is also shown. Red dashed lines mark the
        asymptotic threshold $\alpha_c(\kappa)$ predicted by
        Claim~\ref{claim:rank} ($\alpha_c \approx 0.31, 0.46, 0.50$). The
        transitions for the original and decoupled problems are
        indistinguishable across all $\kappa$, extending the equivalence
        conjecture to the rank-constrained case, and both substantially
        outperform the Hebbian ansatz at $\kappa = 1$. Models are trained
        with Adam on the cross-entropy loss with $W = QR^\top$,
        $Q, R \in \mathbb{R}^{d \times (\kappa d)}$;
        see~\cref{sec:experimental-details}.}
        \vspace*{-1em}
\end{figure}

\begin{figure}
    \centering
    \includegraphics[width=.9\linewidth]{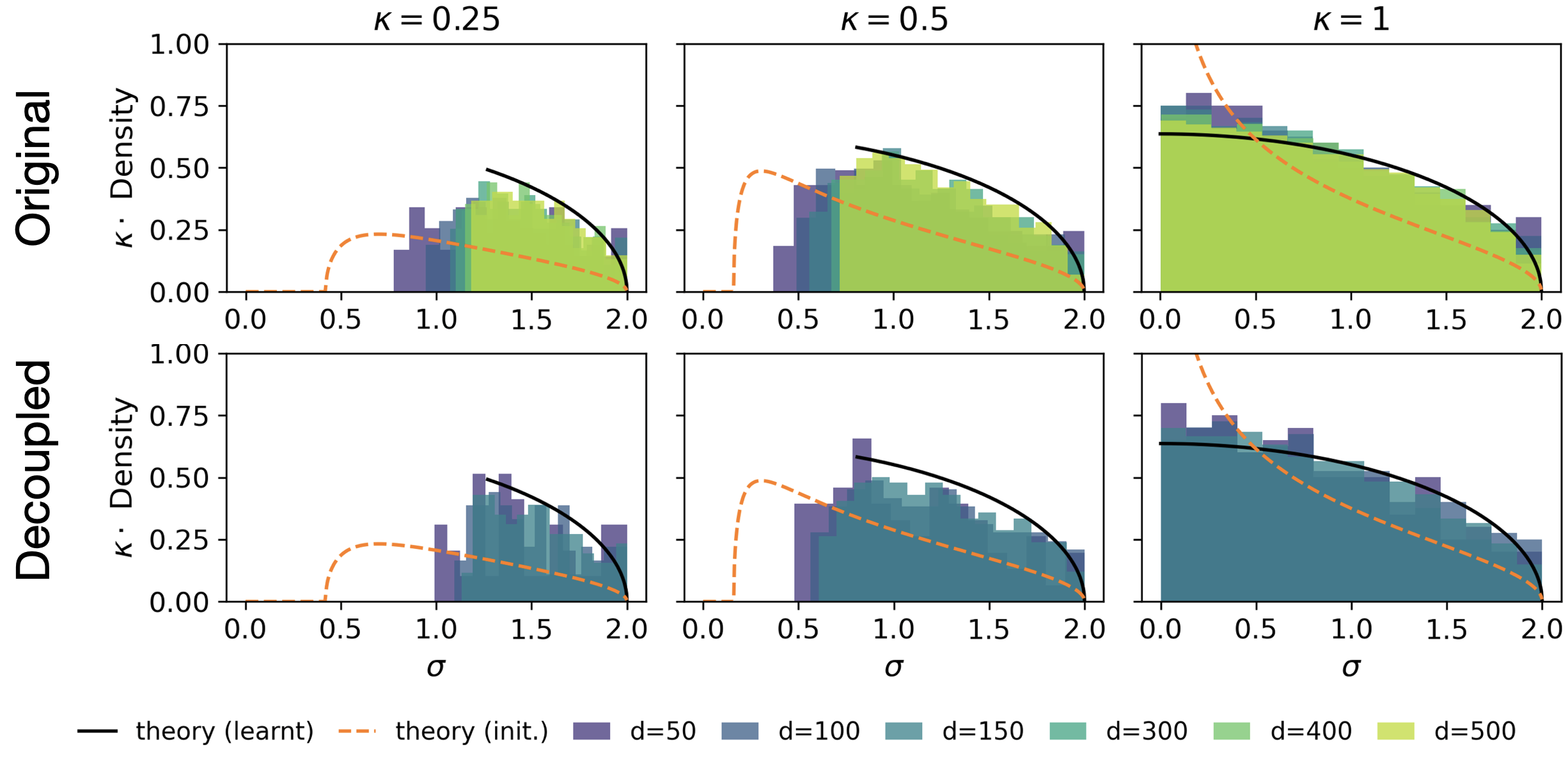}
    \caption{\label{fig:spectra} Distribution of the non-zero singular values at
    the capacity threshold in the $(\OP)$ (top) and $(\DP)$ (bottom) for various $d$ and $\kappa$, normalised to have top singular value equal to $2$. Each panel shows histograms from
    simulation data (shaded) alongside the theoretical prediction at capacity
    from \cref{claim:rank} (solid line) and the spectrum at initialisation
    (dashed line) for the two-layer parameterisation $W = QR^\top$ with i.i.d.
    Gaussian $Q, R \in \mathbb{R}^{d \times m}$, $m = \kappa d$. 
    The strong similarity between the spectra of the optimal solutions
    in the two problems provides further evidence for their equivalence. }
\end{figure}

\subsection{Evidence II: Structure of the learnt weights}
\label{sec:evidence-spectra}


Our second line of evidence concerns the structure of the learnt weights. \Cref{fig:spectra} shows the singular value distribution of the optimal weight matrices at the capacity threshold for different rank ratios $\kappa$ and input dimensions $d$. The first observation is that the spectra show a clear departure from the spectra at initialisation, which we indicate with the orange dashed line. Second, we observe again a good match between the spectra of the original and the uncoupled problem. Finally, we note that the spectral density is well predicted by our theory; we plot the prediction of~\cref{claim:rank} with the black solid line. Interestingly, the strong agreement with the theory suggests that the singular value distribution at capacity converges much faster than the capacity threshold itself.

\subsection{Evidence III: Scores histograms}
\label{sec:evidence-scores}

Our final line of evidence for the equivalence between $(\OP)$ and~$(\DP)$ comes from an analysis of the scores $u_\rho^\top W e_\mu$ (and analogously $u^{(\mu)\top}_\rho W e_\mu$ in the decoupled problem), which also sheds light on the mechanism of pattern storage in the associative memory. Successful storage can in principle be achieved through multiple mechanisms: one may (1) increase the target scores where $\rho=\mu$ while leaving the non-target scores $\rho \neq \mu$ essentially unchanged, (2) decrease the non-target scores while keeping the target scores fixed, or (3)  combine both effects. 

We found that the associative memories opt for the strategy (1). In \cref{fig:hist}, we find that scores for non-target pairs follow a standard Gaussian distribution for various loads. In contrast, for the target pair $\rho = \mu$, learning drives the corresponding score towards the right tail of the Gaussian distribution of non-target scores, reaching values that must be larger than the maximum of the other $p-1$ Gaussian random variables, which is around $\sqrt{2 \log p}$. This appears to be the case for both the original and decoupled problems. Note that~\cref{fig:hist}
aggregates all target scores and all non-target scores across indices: 
a representative single-constraint histogram is shown in~\cref{fig:hist-mu}.

\section{Mechanistic insights into memory storage and retrieval}
\label{sec:mechanistic-insights}

The distribution of the scores gives additional insights into the storage
mechanisms of optimal associative memories. For a Hebbian weight matrix $W_\Hebb \coloneqq
\sum_{z \in [p]} u_{f^*(z)} e_z^\top$, off-target scores are approximately normally
distributed with mean zero and variance $1$ (in our units), while target scores
are normally distributed with mean $\sqrt{\alpha^{-1} \log p}$ and variance 1.
This approximation suggests that the model fails to store all associations when the distributions
overlap too much, in a sense that we make precise in~\cref{sec_app:hebbian}, and
that allows to derive an heuristic for the capacity threshold of the Hebbian
rule at $\alpha_c^{\mathrm{Hebb}} = 1 / 8$ (see~\cref{thm:Hebbian}), which
matches well numerical observations and agrees with the precise analysis of \citet{barnfield2026sharpcapacitythresholdslinear}.
The Hebbian weight matrix thus has a significantly lower capacity than the
optimal weight, cf.\ \cref{fig:accuracy-equivalence}.

The optimal weight matrix achieves the higher capacity $\alpha_c = 1 / 2$ by
adopting a different strategy close to the capacity threshold. Interestingly, as
we detail in~\cref{sec:F-computation}, our theory predicts that below
capacity, the diagonal scores rather concentrate (to leading order as $p \to \infty$) around a deterministic value
that is just enough to allow for correct mapping of inputs to outputs. This seems to match the observations of~\cref{fig:hist},
where we see the variance of the red histograms increase significantly as we
cross capacity, while in the Hebbian matrix the variance of the diagonal scores is always large (\cref{fig:hebbian}). We note that this effect happens in both problems, hinting that
the same mechanism is driving optimal capacity in both.

\begin{figure}[t]
    \centering
    \includegraphics[width=\linewidth]{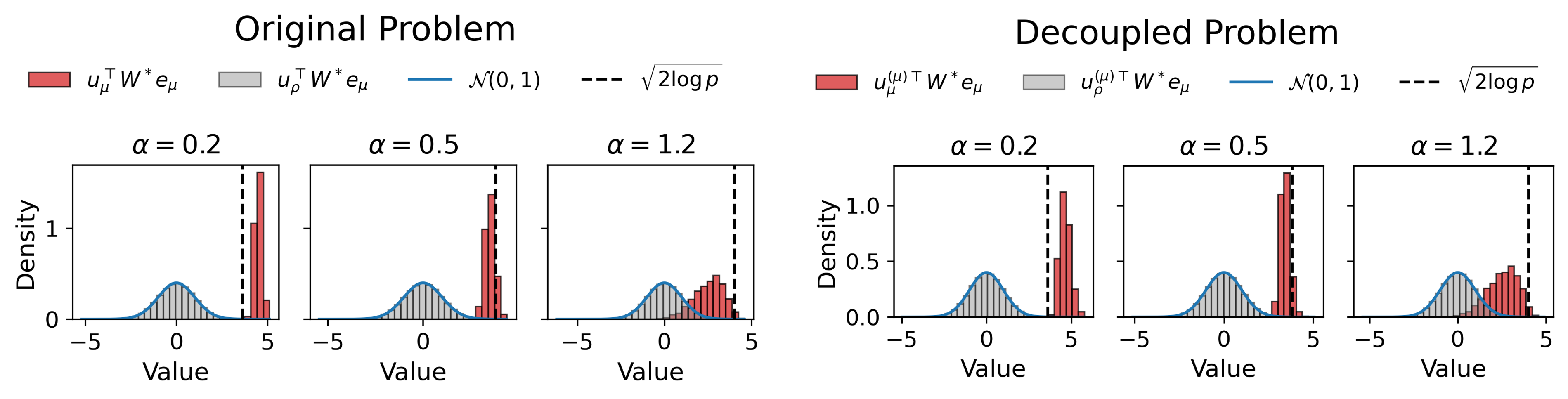}
    \caption{    \label{fig:hist}
Histograms of the target scores (red) and non-target scores (grey) aggregated 
over indices $\mu \in [p]$, for $(\OP)$(\eqref{eq:OP};left) and $(\DP)$(\eqref{eq:DP};right), shown for three representative 
values of the load parameter $\alpha$ ($\alpha = 0.2, 0.5, 1.2$), with $d = 150$. 
After training, the learned weight matrix $W^*$ is normalised so that the 
non-target scores $s_{\mu\rho} = u_\rho^{\top} W^* e_\mu$ have unit variance 
(the problem being invariant under rescaling of $W$, this fixes the scale); 
target scores are rescaled accordingly.
    }
\end{figure}

\section{Storage capacity of decoupled models of linear associative memories}
\label{sec:replica_results}

Motivated by the structural similarity between the original formulation and its decoupled counterpart (illustrated in the left panel of \cref{fig:figure-1}), and by ample evidence of the equivalence between the two problems in the high-dimensional limit (see~\cref{sec:evidence_conjecture}), in this section we outline 
our computation of the memorisation capacity of the decoupled problems, both with and without rank constraints, corresponding to a two-layer linear MLP architecture. Crucially, the decoupled nature of the constraints is critical in gaining 
analytical tractability by this approach, as discussed in~\cref{sec:Original-appendix} and~\cref{app:decoupled}.

\subsection{Statistical physics approach}  %

Following a classical statistical physics approach, we compute the \emph{fractional volume of the solution space} for the decoupled problem, conditioned on fixed realisations of the input and output embeddings, $(E, U)$.
Recall that the embeddings $E = (e_\mu)_{\mu\in[p]}$ and $U = (u^{(\mu)}_\rho)_{\mu, \rho \in [p]}$ are drawn independently in $\bbR^d$ from $\mcN(0, \Id_d)$.
We seek a weight matrix $W \in \bbR^{d \times d}$ with rank at most $m =\kappa d$ (for a constant $\kappa \in (0, 1]$) such that $W \in \Gamma(E, U)$, 
with.
\begin{equation}
\label{eq:volume-DP}
\begin{dcases}
\Gamma(E, U) &\coloneqq \left\{W \in \bbR^{d \times d} \, : \, \forall \mu \in [p], \, \max_{\rho \in [p]} u^{(\mu)\top}_\rho W e_\mu \leq u^{(\mu)\top}_\mu W e_\mu \right\}, \\
\mcV_{\DP}(E,U)
&\coloneqq P_m\left(\Gamma(E, U)\right).
\end{dcases}
\end{equation}
where $P_m$ is a probability measure on $W$ supported on the set of matrices of rank at most $m$, $\Gamma$ is the (convex) solution space, and $\mcV_{\DP}(E, U)$ its volume under $P_m$.
In practice, for $m = d$, we will take $P_m = \mcN(0, \Id_{d^2}/d)$ the standard Gaussian distribution with variance $1/d$, while for $m = \kappa d$ with $\kappa \in (0, 1]$ we take $P_m$ as the law of $W = Q R^\T / \sqrt{dm}$, for $Q, R \in \bbR^{d \times m}$ independently sampled with i.i.d.\ $\mcN(0, 1)$ elements. 
While the choice of $P_m$ will yield some analytic simplifications, the asymptotic capacity threshold is naturally agnostic to the choice of this prior.


\textbf{Scaling insights --}
For a fixed $W$,
by permutation invariance a given association $\mu \in [p]$ is satisfied with probability $1/p$, 
and so the probability that all the independent associations are satisfied is 
$(1/p)^p = e^{-p \log p}$. 
In particular, we have $\EE[\mcV_\DP(E, U)] = e^{-p \log p}$.
On the other hand, the parameter space has $d^2$ degree of freedom: a balancing with the entropic cost of the constraints suggests that the capacity phase transition occurs for a finite value of the load parameter
\begin{equation}\label{eq:alpha-scaling}
    \alpha \coloneqq \frac{p \log p}{d^2}.
\end{equation}
Notice that in the original problem, even the first moment of the volume of solutions is hard to characterise exactly, as we discuss in~\cref{sec:Original-appendix}.
Beyond this heuristic argument, the scaling of eq.~\eqref{eq:alpha-scaling} is naturally predicted by our formal capacity analysis, which we now outline.

\subsection{Capacity threshold of the decoupled linear associative memory model}
\label{sec:claim-2}

Our main results are sharp computations of the asymptotics of the (normalised) volume:
\begin{equation}\label{eq:def_fentropy}
    \varphi_d \coloneqq d^{-2}\EE \log \mcV_{\DP}(E,U),
\end{equation}
where the average is over $(e_\mu, u_\rho^{(\mu)})_{\rho,\mu} \iid \mcN(0, \Id_d)$.
$\varphi_d$ is also called \emph{free entropy} in statistical physics.
Our first results concerns the case $\kappa = 1$ ($m = d$), i.e.\ without any rank constraint.
\begin{claim}[Storage capacity of a decoupled linear associative memory]
\label{claim:capacity}
Consider $\varphi_d$ as defined in eq.~\eqref{eq:def_fentropy}, for $P_m = \mcN(0, \Id_{d^2}/d)$. In the limit $d, p \to \infty$ with $(p \log p)/d^2 \to \alpha > 0$:
\begin{itemize}[leftmargin=*, topsep=0pt, itemsep=0pt]
    \item If $\alpha < 1/2$, then (up to additive constants independent of $\alpha$)
    \begin{equation}\label{eq:free-entropy-RS}
    \lim_{d \to \infty} \varphi_d = \inf_{q \in [0,1)} \left\{
    \frac{1}{2}\log(1-q) + \frac{q}{2(1-q)} 
    + \alpha \lim_{p\to\infty} \frac{1}{\log p}
    \EE_{\eta \sim \mcN(0, \Id_p)}\bigl[\log f_p(q;\eta)\bigr]
    \right\},
\end{equation}
where $f_p(q;\eta)$ is defined in eq.~\eqref{eq:f-def}. In particular, $\varphi(\alpha) \coloneqq \lim_{d\to \infty} \varphi_d > -\infty$.
    \item As $\alpha \uparrow 1/2$, the minimiser $q^\star(\alpha)$ of eq.~\eqref{eq:free-entropy-RS} satisfies $q^\star(\alpha) \uparrow 1$, and $\varphi(\alpha) \to - \infty$.
\end{itemize}
\end{claim}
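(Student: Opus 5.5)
The plan is to compute $\varphi_d$ with the replica method under a replica-symmetric ansatz, exploiting the decoupling: each constraint $\mu$ involves only $e_\mu$ and its private set $U^{(\mu)}$. We write $\EE\log\mcV = \lim_{n\to0}\partial_n \log \EE \mcV^n$, introduce $n$ replicas $W^a \sim \mcN(0,\Id_{d^2}/d)$, and note that for fixed $(W^a)_a$ the scores $s^a_{\mu\rho} \coloneqq u^{(\mu)\top}_\rho W^a e_\mu$ are, by conditional Gaussianity over $U^{(\mu)}$ and concentration of $\|W^a e_\mu\|$ and of $e_\mu^\top W^{a\top}W^b e_\mu$ for generic $e_\mu$ (this is the Gaussian universality step alluded to in \cref{app:decoupled}), jointly Gaussian across replicas, independent across $\rho$ and across $\mu$. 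Their covariance is governed by the overlap $Q_{ab} = \Tr(W^{a\top}W^b)/d$. Since the constraint set is a cone, only directions matter; we may therefore normalise $Q_{aa}=1$. Under RS, $Q_{ab}=q$ for $a\neq b$.

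The computation then splits into the usual entropic and energetic terms. The entropic term is the log-volume of $n$ Gaussian $d\times d$ matrices with prescribed overlaps. It is a standard $d^2$-dimensional Gaussian integral and yields $d^2\bigl[\tfrac12\log(1-q)+\tfrac{q}{2(1-q)}\bigr]$ at $n\to0$, up to $\alpha$-independent constants. For the energetic term, write $s^a_\rho = \sqrt q\,\eta_\rho + \sqrt{1-q}\,z^a_\rho$ with shared $\eta\sim\mcN(0,\Id_p)$. The per-constraint factor, averaged over replicas, is then $\EE_\eta[f_p(q;\eta)^n]$, where
\[
f_p(q;\eta)=\PP_z\bigl(\sqrt q\,\eta_\mu+\sqrt{1-q}\,z_\mu\ge \max_{\rho}(\sqrt q\,\eta_\rho+\sqrt{1-q}\,z_\rho)\bigr),
\]
which I take to coincide with eq.~\eqref{eq:f-def}. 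The $p$ independent constraints contribute $p\,\EE_\eta\log f_p$. Dividing by $d^2$ and using $p/d^2=\alpha/\log p$ gives the third term in eq.~\eqref{eq:free-entropy-RS}. Saddle point over $q$, taken as an infimum in the $n\to0$ convention, yields the first bullet. The scaling check is that at $q=0$ we get $f_p=1/p$, and the energetic term equals $-\alpha$, consistent with $\EE\mcV=e^{-p\log p}$.

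The second bullet needs the large-$p$ asymptotics of $\frac{1}{\log p}\EE\log f_p(q;\eta)$ as $q\to1$. Set $\epsilon = 1-q$. For $\epsilon$ small the maximum of $\sqrt q\,\eta_\rho$ over $\rho\ne\mu$ sits near $\sqrt{2\log p}$ with $O(1/\sqrt{\log p})$ fluctuations. Meanwhile $\eta_\mu$ is typically $O(1)$, so $z_\mu$ must compensate a gap of order $\sqrt{2\log p}$. This gives $\log f_p\approx -\log p/(\epsilon')$ for some effective scale $\epsilon'$ that I will determine, and the energetic term becomes roughly $-\alpha\, g(\epsilon)$ with $g$ explicit and divergent as $\epsilon\to0$. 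Comparing with the entropic term near $q\to1$, namely $\tfrac{1}{2\epsilon}+\tfrac12\log\epsilon+O(1)$, should show the following. For $\alpha<1/2$ the extremum stays at some $q^\star<1$ and $\varphi$ is finite. The leading $1/\epsilon$ coefficients cancel exactly when $\alpha=1/2$, forcing $q^\star\uparrow1$ and $\varphi\to-\infty$. Concretely, I expect $\log f_p\sim -\log p\cdot\frac{1}{1-q}\cdot c$ with $c$ making the balance $\tfrac{1}{2}=\alpha\cdot 1$.

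The main obstacle is this extreme-value asymptotic: we must interchange $p\to\infty$ with the $q\to1$ limit and control $\EE_\eta\log f_p$ uniformly. Both the atypical events where $\eta_\mu$ itself is large and the Gumbel fluctuations of the maximum have to be shown to be subleading, at relative order $O(1/\log p)$, which is also the source of the slow finite-size convergence. A secondary issue, which I will not make rigorous, is justifying RS (convexity of $\Gamma$ makes this plausible) and the Gaussian reduction of the scores.
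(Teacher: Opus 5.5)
Your route is the paper's: the replica trick, factorisation over the $p$ independent constraints, and Gaussian equivalence replacing $e^\top W^aW^{b\top}e/d$ by $Q_{ab}=\operatorname{Tr}(W^aW^{b\top})/d$. It continues the same way with the log-determinant entropic term, the RS representation $\sqrt q\,\eta_\rho+\sqrt{1-q}\,x^a_\rho$ giving $f_p$ of eq.~\eqref{eq:f-def}, and an extreme-value analysis of $G(q)=\lim_p \mathbb{E}_\eta[\log f_p(q;\eta)]/\log p$ near $q=1$. There are two harmless deviations. First, you fix $Q_{aa}=1$ by scale invariance; the paper optimises over $Q$ and finds $Q=1$, because the energetic term depends only on $q/Q$. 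Second, you read off the threshold by comparing the $1/\epsilon$ coefficients of the functional, whereas the paper uses the stationarity condition and L'H\^opital, $\alpha_c^{-1}=\lim_{q\to1}-2(1-q)^2G'(q)=\lim_{q\to1}-2(1-q)G(q)$. Your version needs only the asymptotics of $G$, not of $G'$, and directly gives finiteness of $\varphi$ for $\alpha<1/2$.

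The one real gap is that you never compute the constant $c$ in $G(q)\sim -c/(1-q)$. You set it to whatever makes the balance read $\tfrac12=\alpha\cdot 1$, which is circular, since $c=1$ is the entire quantitative content of the second bullet. It follows in one line from your own mechanism. With $\eta_\mu=O(1)$, $\sqrt{1-q}\,z_\mu$ must exceed the competitors' maximum. That maximum is $\simeq\sqrt q\sqrt{2\log p}$ if one keeps only the $\eta$-part, as the paper does, or $\simeq\sqrt{2\log p}$ if one keeps the full unit-variance competitor scores. The Gaussian tail then gives $\log f_p\simeq -\frac{q}{1-q}\log p$ or $-\frac{1}{1-q}\log p$ respectively. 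Either way $c\to1$ as $q\to1$, so $\frac{1-2\alpha}{2\epsilon}$ changes sign at $\alpha=1/2$; this is the paper's eq.~\eqref{eq:Gaussian_tail}.

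Also, the interchange of the $p\to\infty$ and $q\to1$ limits, which you list as the main obstacle, is not needed. $G$ is defined by $p\to\infty$ at fixed $q$, so it suffices to bound $G$ at fixed $q$. The paper does this with $-(1+\beta_q)^2\le G(q)\le -\frac{k}{k+1}\beta_q^2$ for every $k\ge1$, where $\beta_q=\sqrt{q/(1-q)}$.
\begin{itemize}
\item The lower bound replaces every competitor by their maximum $a_p\simeq\sqrt{2\log p}$ and restricts $\xi$ to a window of width $2\varepsilon a_p$ around $(\beta_q+1)a_p$.
\item The upper bound keeps only the $k$ largest competitors. It evaluates the resulting $k$-dimensional Gaussian orthant integral, with covariance $\Sigma_{ij}=1+\delta_{ij}$, giving the exponent $\frac{k}{k+1}\beta_q^2$.
\end{itemize}
Together these yield $(1-q)G(q)\to-1$.
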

In particular, \cref{claim:capacity} predicts the storage capacity threshold $\alpha_c^\DP = 1/2$, in accordance with~\cref{conjecture-capacity}.
The derivation of~\cref{claim:capacity} is detailed in~\cref{sec:quenched-computation-DP}.

\textbf{The replica method --}
\cref{claim:capacity} is based on an application of the \emph{replica method} from statistical physics, a non-rigorous technique which has had
a wide range of applications, notably in statistical learning and the theory of neural networks~\citep{mezard1987spin,gardner:hal-03285587,gabrie2020mean,charbonneau2023spin,maillard2024bayes,barbier2025statistical}.
The replica method is widely believed to yield exact asymptotic predictions, as was established rigorously in a large range of problems~\citep{guerra2002thermodynamic,talagrand2006parisi,panchenko2013parisi,barbier2019optimal,gerbelot2022asymptotic,vilucchio2025asymptotics}.
More precisely, \cref{claim:capacity} relies on a so-called \emph{replica-symmetry} assumption, which is known to hold in a large class of convex constraint satisfaction problems~\citep{talagrand2010mean,barbier2022strong}.

Finally, while we do not prove it here, the statistical physics analysis predicts that $d^{-2} \log \mcV_\DP(E, U)$ concentrates on its average $\varphi_d$ in the asymptotic limit $d, p \to \infty$.
This concentration of the log-volume has 
been established in related models~\citep{talagrand2010mean},
and~\cref{claim:capacity}
should thus be understood as characterising the \emph{typical} size of the solution space.

\textbf{The parameter $q$ --}
One can associate to the volume of eq.~\eqref{eq:volume-DP} a Gibbs-type measure $\langle \cdot \rangle$, supported on $\Gamma(E, U)$, and with density proportional to $P_m(W)$. 
Geometrically, the value $q^\star(\alpha) \in [0, 1)$ achieving the infimum in eq.~\eqref{eq:free-entropy-RS} 
is such that, in the high-dimensional limit, $(W \cdot W') / (\|W\|_F \|W'\|_F) \to q^\star(\alpha)$ for $W, W' \sim \langle \cdot \rangle$ two samples under this measure.
The set of solutions $\Gamma(E, U)$ is a convex cone, 
and $(1-q)$ measures thus the ``angular width'' of this cone: as the load $\alpha$ increases towards the capacity threshold, $q$ increases as well, marking the shrinking of the set of solutions. $q$ approaches $1$ in the limit $\alpha \uparrow 1/2$, marking the critical threshold beyond which no solutions exist.


\textbf{Finite-size effects, and slow convergence to the asymptotic limit --}
Interestingly, the derivation detailed in~\cref{app:decoupled} predicts that eq.~\eqref{eq:free-entropy-RS} holds, up to $\smallO(1)$ error terms, for the values of $(d, p)$ corresponding to the embedding dimension and number of patterns, that is:
\begin{equation}\label{eq:fsize_effects}
     \varphi_d = 
    \frac{1}{2}\log(1-q^\star) + \frac{q^\star}{2(1-q^\star)} 
    + \frac{\alpha }{\log p}
    \EE_{\eta}\bigl[\log f_p(q^\star;\eta)\bigr]
    + \smallO(1).
\end{equation}
Our derivation (cf~\cref{sec:F-computation}) suggests that the last term in the right-hand-side of this equation converges to its asymptotic limit very slowly, with typical corrections of size $\tmcO(1/\log p)$ (up to multiplicative $\log \log p$ terms): in turn, this creates strong finite-size corrections to the capacity threshold with respect to its asymptotic limit $\alpha_c = 1/2$.
This mechanism explains the very slow convergence observed in~\cref{fig:figure-1}: in~\cref{fig:finite-size}, we show that these observations are compatible with a $\tmcO(1/\log p)$ convergence rate of the capacity threshold.

\section{The memorisation capacity of a 2-layer linear MLP}%
\label{subsec:capacity_rank_constrained}

We finally turn to characterising the optimal storage capacity in the setting where 
the matrix is constrained to have rank $m = \kappa d$ with $\kappa \in (0, 1]$. 
Notably, this setting encompasses the full-rank case studied above when $\kappa = 1$, 
so that \cref{claim:rank} below provides a unified result for both cases. 
Recall that $\varphi_d$ was defined in eq.~\eqref{eq:def_fentropy}, and that we take $P_m$
as the law of $W = Q R^\T / \sqrt{dm}$ with $Q, R \in \bbR^{d 
\times m}$ with i.i.d.\ elements sampled from $\mcN(0, 1)$. 
Equivalently, we consider a a two-layer linear network with hidden dimension $m$ as our input-to-output map in embedding space. 

\begin{claim}[Replica-symmetric prediction of the 2-layer linear MLP capacity]
\label{claim:rank}
Let $m=\kappa d$ with $\kappa\in(0,1]$ denote the rank of the weight matrix, and recall the limit $d, p \to \infty$ with $(p \log p)/d^2 \to \alpha > 0$. 
Within the so-called replica-symmetric assumption discussed below:
\begin{itemize}[leftmargin=*, topsep=0pt, itemsep=0pt]
    \item There exists $\alpha_c(\kappa) > 0$ such that for $\alpha < \alpha_c(\kappa)$, $\varphi(\alpha) = \lim_{d \to \infty} \varphi_d(\alpha) > - \infty$ is given by a formula similar to eq.~\eqref{eq:free-entropy-RS}, while $\varphi(\alpha) \to -\infty$ as $\alpha \uparrow \alpha_c(\kappa)$.
    Moreover, we have
    \begin{equation}\label{eq:alphac_kappa}
    \alpha_c(\kappa)
    = \frac{1}{2}\int_{X_{\qc}(1-\kappa)}^{2}
    \rho_{\qc}(\sigma)\,\sigma^2\,d\sigma,
    \end{equation}
    where $\rho_{\qc}(\sigma)\coloneqq(1/\pi)\sqrt{4-\sigma^2}$ is the 
    quarter-circle law, and $X_{\qc}:[0,1]\to[0,2]$ denotes its quantile 
    function, i.e.\ $X_{\qc}(p) = \inf\bigl\{\sigma : \int_0^\sigma 
    \rho_{\qc}(\sigma')\,\mathrm{d}\sigma' \geq p\bigr\}$.
    \item 
    Recall that we define $\langle \cdot \rangle$ as the measure 
    supported on $\Gamma(E, U)$, and with density proportional to $P_m(W)$. 
    For $\alpha < \alpha_c(\kappa)$, let $W \sim \langle \cdot \rangle$, and denote $\tW \coloneqq 2W / \|W\|_\op$, where $\|W\|_\op$ is the top singular value of $W$.
    Then the empirical singular value distribution of $\tW$ converges as $d \to \infty$ to a well-defined distribution $\rho[\kappa, \alpha]$, and further
    for $\alpha \uparrow \alpha_c(\kappa)$, $\rho[\kappa, \alpha] \to \rho_c[\kappa]$, given by
    \begin{equation}\label{eq:def_rhoc}
        \rho_c[\kappa](\sigma)
        \coloneqq (1-\kappa)\,\delta(\sigma)
        + \kappa\,\rho_{\qc}(\sigma)\,\mathbf{1}_{[X_{\qc}(1-\kappa),\,2]}(\sigma).
    \end{equation}
\end{itemize}
\end{claim}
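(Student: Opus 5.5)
We would derive \cref{claim:rank} with the same replica-symmetric computation used for \cref{claim:capacity}, changing only the prior $P_m$. The overlap now has to be treated as a matrix-valued order parameter that sees the singular value structure of $W$.

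\textbf{Step 1: splitting into energy and entropy.} We compute $\EE \mcV_\DP^n$ with replicas $W^a = Q^aR^{a\top}/\sqrt{dm}$. In the decoupled problem each constraint $\mu$ involves the fresh Gaussian vectors $u^{(\mu)}_\rho$ and $e_\mu$. Conditionally on the replicas, the scores $u^{(\mu)\top}_\rho W^a e_\mu$ are therefore, by the Gaussian universality discussed in \cref{sec:conjecture}, jointly Gaussian, and their covariance across replicas is governed by the overlaps $d^{-1}\Tr(W^{a\top}W^b)$.

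Consequently the energetic term is identical to the one in \cref{claim:capacity}. Under the RS ansatz with overlap $q$, it equals $\alpha\lim_p(\log p)^{-1}\EE\log f_p(q;\eta)$. All the $\kappa$-dependence then sits in the entropic term
\[
s_\kappa(q)=\lim d^{-2}\log P_m\bigl(d^{-1}\Tr W^{a\top}W^b=q \text{ for } a\neq b\bigr),
\]
the large-deviation rate of the overlap of $n\to0$ replicas drawn under the product-of-Gaussians prior.

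\textbf{Step 2: computing $s_\kappa(q)$.} We write $W^a=\bar W+\sqrt{1-q}\,Z^a$ with common part $\bar W$. Computing the entropy of this constrained configuration reduces to an HCIZ-type spherical integral, or equivalently to a variational problem over the singular value distribution $\rho$ of $\bar W$ with a log-determinant entropy. Because the prior is concentrated on rank at most $\kappa d$, $\rho$ carries an atom $(1-\kappa)\delta_0$. For $\kappa=1$ this step must reproduce $\tfrac12\log(1-q)+\tfrac{q}{2(1-q)}$, which serves as a check.

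\textbf{Step 3: the limit $q\to1$.} From \cref{sec:F-computation}, the energetic term diverges like $-\alpha\cdot(\text{const})/(1-q)$. The entropic term diverges like $+E_\kappa/(1-q)$, where $E_\kappa$ is the leading coefficient of a constrained optimisation. That optimisation asks how much squared Frobenius mass, normalised to the operator scale, a rank-$\kappa d$ matrix can allocate while keeping its score fluctuations compatible. Finiteness of $\varphi$ requires $\alpha<\alpha_c(\kappa)$ with $\alpha_c(\kappa)$ the ratio of the two coefficients.

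The optimiser keeps the $\kappa d$ largest singular values of a full-rank matrix whose normalised spectrum is quarter-circular, and kills the rest. This gives $\alpha_c(\kappa)=\tfrac12\int_{X_\qc(1-\kappa)}^2\rho_\qc\sigma^2$, and at $\kappa=1$ it yields $\tfrac12\int_0^2\rho_\qc\sigma^2=\tfrac12$, consistent with \cref{claim:capacity}.

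\textbf{Step 4: the spectrum.} The saddle point $\rho[\kappa,\alpha]$ of the spectral variational problem, after the normalisation $\tW=2W/\|W\|_\op$, converges as $q\to1$ to this truncated quarter-circle plus an atom, which is eq.~\eqref{eq:def_rhoc}.

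\textbf{Main obstacle.} The hardest part is Step 2 together with its $q\to1$ asymptotics: evaluating the rank-constrained spherical integral and showing that the optimal limiting spectrum is exactly the top-$\kappa$ truncation of the quarter-circle law. We would first solve the $\kappa=1$ case explicitly through the Marchenko–Pastur/quarter-circle correspondence. We would then treat the rank constraint as a Lagrange multiplier on the fraction of zero singular values, and argue that at leading order in $1/(1-q)$ the problem becomes linear in $\rho$ under a majorisation constraint, which forces the truncation.
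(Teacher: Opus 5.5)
Your skeleton is sound. The energetic term is indeed unchanged, all the $\kappa$-dependence sits in the entropic term, and $\alpha_c(\kappa)$ is the ratio of the $1/(1-q)$ coefficients (energy $\sim -\alpha/(1-q)$, entropy $\sim +E_\kappa/(1-q)$). The gap is that Steps 2--4 carry the entire content of the claim, yet they assert the answer rather than derive it. The route you sketch does not contain the mechanism that produces it.

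\begin{itemize}
\item The decomposition $W^a=\bar W+\sqrt{1-q}\,Z^a$ is incompatible with $\mathrm{rk}\,W^a\le\kappa d$.
\item You never say which fixed matrix your spherical integral couples $W$ to. If the variational object is the spectrum of $\bar W$, no quarter-circular matrix appears anywhere for the nonzero singular values to align with.
\item $\rho_{\mathrm{q.c.}}$ cannot come from the prior either. For $\kappa<1$, $QR^\top/\sqrt{dm}$ has $\mathrm{MP}_\kappa\boxtimes\mathrm{MP}_\kappa$ singular values. Solving $\kappa=1$ first and then adding a multiplier on the fraction of zero singular values gives no reason why the $\kappa<1$ optimum should inherit the quarter-circle shape.
\item The relevant coefficient is not a Frobenius mass normalised to the operator scale. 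The operator norm only enters the normalisation of $\tilde W$.
\end{itemize}

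The missing idea is the Hubbard--Stratonovich step. Impose the RS overlaps with conjugates $(\hat Q,\hat q)$ and linearise $\frac{d\hat q}{2}\|\sum_a W^a\|_F^2$ with a Ginibre matrix $Z$. The replicas then decouple, and the entropic term becomes $\mathbb{E}_Z\log\int P_m(\mathrm{d}W)\exp\{-\frac d2(\hat Q+\hat q)\|W\|_F^2+d\sqrt{\hat q}\,\mathrm{Tr}(Z^\top W)\}$. After an SVD this is a rectangular HCIZ integral against $Z$, whose singular values follow $\rho_{\mathrm{q.c.}}$; this is the source of the quarter circle.

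The saddle point gives $\hat Q=-t\hat q$ and $\hat q\simeq 2\alpha/(Q(1-t)^2)\to\infty$ as $t\to1$. The integral is therefore dominated by $\max_{\mathrm{rk}W\le m}\{\sqrt{\hat q}\,\mathrm{Tr}(Z^\top W)-\frac{(1-t)\hat q}{2}\|W\|_F^2\}$. All prior-dependent terms (Vandermonde, $\log\sigma$, $J_\kappa$) become subleading, which is why the threshold does not depend on $P_m$.

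Your majorisation intuition is right, but only for the alignment step: von Neumann's inequality pairs the nonzero singular values of $W$ with the top $\kappa$ fraction of those of $Z$. After optimising over $\hat q$, the leading entropy is $M^2/(2Q(1-t))$, with $M=\int_{1-\kappa}^1X_{\mathrm{q.c.}}X_W$ and $Q=\int_{1-\kappa}^1X_W^2$. This is not linear in $\rho$. Cauchy--Schwarz in the quantile function (equivalently, the paper's stationarity condition in $\rho$) gives the interior optimum $X_W\propto X_{\mathrm{q.c.}}$ on $[1-\kappa,1]$, i.e.\ the Eckart--Young truncation of $Z$. This yields $E_\kappa=\frac12\int_{X_{\mathrm{q.c.}}(1-\kappa)}^2\rho_{\mathrm{q.c.}}(\sigma)\sigma^2\,\mathrm{d}\sigma$, hence $\alpha_c(\kappa)$ and the limiting spectrum $\rho_c[\kappa]$.
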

Notice that $\alpha_c(1) = 1/2$, in accordance with the results of~\cref{claim:capacity}.
We obtain an explicit equation similar to eq.~\eqref{eq:free-entropy-RS} for the asymptotic volume of the solution set:
the resulting variational representation is however more involved, and is given in eq.~\eqref{eq:variational-rank}.
Notably, in addition to the storage capacity, 
our derivation yields a sharp characterisation 
of the asymptotic distribution of singular values of the optimal solution close to the 
capacity threshold. 
Notice that the normalisation of $\tW$ is arbitrary since the set of solutions is invariant by positive multiplication, and is simply chosen to have a top singular value at $2$.
Remarkably, our prediction matches very well numerical experiments (see the black curves in~\cref{fig:spectra}), and is another evidence for the equivalence of the original and decoupled problems.

\textbf{Convexity and replica symmetry --}
It is important to notice that for $\kappa < 1$ the set of solutions $\Gamma_m(E, U) \coloneqq \Gamma(E, U) \cap \{W \in \bbR^{d \times d} \, : \, \rk(W) \leq m\}$ is non-convex, due to the rank constraint. 
It is known that in non-convex problems the replica-symmetric assumption mentioned above is not necessarily tight, but generically yields an upper bound on the asymptotic volume, and thus an upper bound on the capacity threshold (see e.g.~\cite{maillard2025injectivity}). As such, \cref{claim:rank}  should be understood as a prediction for both the capacity threshold (and a generic upper bound) and for the spectra of solutions \emph{within the replica-symmetric approximation}.
Similarly, non-convexity implies that gradient-based optimisation is not guaranteed to reach optimal capacity. 
We emphasise that we nevertheless find excellent agreement between the replica-symmetric predictions and numerical experiments (see e.g.~\cref{fig:spectra}), giving strong credibility to the predictive power of our computation.




\section{Outlook and open questions}

A natural open direction stemming from our work is a rigorous proof of~\cref{conjecture-capacity}, which would require both a rigorous demonstration of the capacity threshold in the decoupled problem and a proof of the equivalence of decoupled and original problem in terms of their capacity threshold. While we expect that a rigorous analysis of the decoupled problem is within reach of modern universality approaches~\citep{montanari2022universality,maillard2023exact,xu2025fundamental} (we discuss this point in~\cref{subsubsec:moments_concentration}), establishing the equivalence between the original and the decoupled problem rigorously might require new probabilistic tools to disentangle the strong correlations between the constraints. 
Another intriguing question arises from the observed validity of the replica-symmetric assumption in the non-convex rank-constrained problem:
interestingly the validity of the replica-symmetric approximation and the capability of gradient descent to reach the global optimum of a non-convex loss has also been recently observed in models of learning with neural networks
where non-convexity arises from similar rank constraints~\citep{maillard2024bayes,erba2025nuclear,barbier2025statistical,martin2026high}, and it is possible that a common explanation underlines these different observations.
Finally, expanding our analysis to two-layer non-linear models or to the more realistic embedding distributions of \citet{kim2026sharpcapacityscalingspectral} is an exciting future prospect.

\section*{Acknowledgments}

AM is particularly grateful to Lenka Zdeborov\'a for introducing him to this
problem as well as for stimulating discussions. AG wishes to thank Antoine
Maillard and the entire Inria ARGO team for their hospitality during the visit
in which this project was carried out. AG gratefully acknowledges funding from
Next Generation EU, in the context of the National Recovery and Resilience Plan,
Missione 4, Componente 1, Investimento 4.1 “Estensione del numero di dottorati
di ricerca e dottorati innovativi per la Pubblica Amministrazione e il
patrimonio culturale” (CUP G93C23000620003), and also gratefully acknowledges
financial support from Fondazione Zegna. SG gratefully acknowledges funding from
the European Research Council (ERC) for the project ``beyond2'', ID 101166056;
from the European Union--NextGenerationEU, in the framework of the PRIN Project
SELF-MADE (code 2022E3WYTY – CUP G53D23000780001), and from Next Generation EU,
in the context of the National Recovery and Resilience Plan, Investment PE1 --
Project FAIR ``Future Artificial Intelligence Research'' (CUP G53C22000440006).
\printbibliography
\newpage
\appendix
\onecolumn 
\numberwithin{equation}{section}
\numberwithin{figure}{section}


\section{Numerical Experiments}

\label{sec:experimental-details}
\paragraph{Common training strategy.}
All experiments reported in this manuscript follow the same training
protocol. Inputs and outputs (in both the original and decoupled
problems) are drawn independently from a $d$-dimensional Gaussian
distribution with zero mean and identity covariance. The goal is to
satisfy the association constraints in~\cref{eq:OP,eq:DP}. To this end,
we train a model $F_W(e)$ with the cross-entropy loss on both variants
of the memory task.

\textit{Original problem.} The cross-entropy loss with a shared set of
outputs reads
\begin{equation}\label{eq:def_centropy_loss_op}
\mcL_p^{\mathrm{(OP)}}(W)
= -\frac{1}{p} \sum_{\mu=1}^{p}
\log
\frac{
\exp\!\left( u_\mu^{\top} F_W(e_\mu) \right)
}{
\sum_{\rho=1}^{p} \exp\!\left( u_\rho^{\top} F_W(e_\mu) \right)
}.
\end{equation}

\textit{Decoupled problem.} The variant with replicated sets of outputs
is trained on
\begin{equation}\label{eq:def_centropy_loss_dp}
\mcL_p^{\mathrm{(DP)}}(W)
= -\frac{1}{p} \sum_{\mu=1}^{p}
\log
\frac{
\exp\!\left( (u_\mu^{(\mu)})^{\top} F_W(e_\mu) \right)
}{
\sum_{\rho=1}^{p} \exp\!\left( (u_\rho^{(\mu)})^{\top} F_W(e_\mu) \right)
}.
\end{equation}

Both objectives are optimised with Adam at learning rate
$\eta = 10^{-2}$ for at most $512$ full-batch steps, using linear
warmup followed by cosine decay; training stops once $99.9\%$ accuracy
is reached. In the full-rank case, we use a linear model
$F_W(x) = Wx$ with $W \in \bbR^{d \times d}$ initialised with i.i.d.\
Gaussian entries of standard deviation $d^{-1}$. In the rank-constrained
case, we train a two-layer linear network $F_W(e) = QR^{\top} e$ with
$Q, R \in \bbR^{d \times m}$ and $m \leq d$, both initialised at the
same scale.

All experiments were run on a single NVIDIA A100-PCIE-40GB GPU and each
completes within a few hours. Given the small scale, reproduction is
feasible on less powerful hardware, including standard CPUs.

\subsection{Experiments in the Main Text}

\textbf{\Cref{fig:figure-1}}(Right) is obtained by following the common training protocol described above for $25$ values of the load parameter $\alpha = p \log p / d^2$, evenly spaced between $0.4$ and $1.0$. For the original problem we report results for $d =50, 100, 150, 300, 400, 500$. For the decoupled problem, simulations were limited to $d \leq 150$ due to memory constraints: in this setting, each input is associated with its own set of $p$ competing outputs, each in $\bbR^d$, requiring storage of order $O(d p^2)$. Since $p \log p$ scales as $d^2$ at fixed load, this quickly becomes prohibitive at larger dimensions. In contrast, for the original problem only a single shared set of $p$ outputs is stored, resulting in a memory footprint of order $O(d p)$. Each experiment is repeated $5$ times with independent random draws of the inputs and outputs. The lines report the empirical mean across repetitions and the standard deviation is reported only for the decoupled problem to avoid cluttering the figure.

\textbf{\Cref{fig:accuracy-equivalence}} The same experiment of \cref{fig:figure-1} is reported also for $\kappa\in\{ 0.25, 0.5\}$. We additionally provide the accuracy lines for the Hebbian ansatz in the case $\kappa=1$. The analytically predicted thresholds are the ones of \cref{claim:rank}.

\textbf{\Cref{fig:spectra}} reports the distribution of the non-zero singular values of the experiments of \cref{fig:accuracy-equivalence}. Since the task is invariant under renormalisation of the weight matrix, after training we renormalise it to have $\|W\|_\op=2$.

\textbf{\Cref{fig:hist}} is obtained by training the full-rank linear models according to the common training protocol with $d = 150$ and for three values of the load parameter $\alpha$ ($0.2$, $0.5$, and $1.2$), corresponding respectively to regimes below, near, and above the capacity threshold. The choice $d = 150$ reflects the largest dimension that can be simulated for the decoupled problem; for consistency, the same dimension is used for the original problem.

After training, we extract the learned weight matrix $W^*$ and normalise it so that the non-target scores $s_{\mu\rho}=u_\rho^{\top} W^*e_\mu$ (and equivalently for the decoupled problem) have unit variance (the problem being invariant under re-normalisation of $W$, this fixes the scale); the target scores are rescaled accordingly. Target scores are shown in red and non-target scores in grey.

Note that overlap between the aggregated target and non-target distributions does not necessarily imply classification errors, since correctness depends on comparing each target score only with its own competing non-target scores rather than with the full set of scores. A representative single-instance, single-constraint histogram is shown in~\cref{fig:hist-mu}.

\subsection{Additional Experiments}

\textbf{\Cref{fig:loss}} reports the cross-entropy loss for the experiments in \cref{fig:accuracy-equivalence}, and the empirical standard deviation is reported for the decoupled problem lines.

\textbf{\Cref{fig:hist-mu}.}
This figure shows the single-constraint analogue of \cref{fig:hist}.

\textbf{\Cref{fig:finite-size}} reports the finite-size scaling of the empirical capacity threshold. For each embedding dimension $d$, the critical load $\alpha_c(p)$ is estimated from the experiments of \cref{fig:figure-1} as the value of $\alpha$ at which the first violation is encountered.



\begin{figure}[ht]
    \centering
\includegraphics[width=0.95\linewidth]{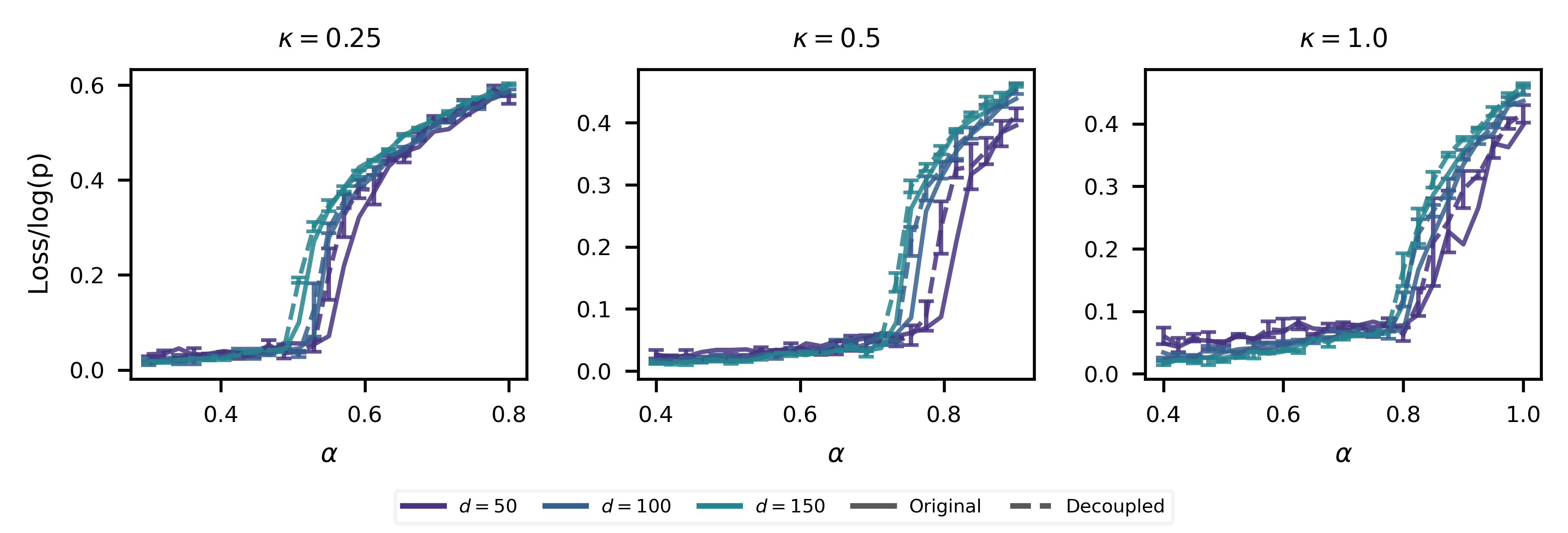}
    \caption{Cross-entropy loss for the experiment in \cref{fig:accuracy-equivalence}. Empirical standard deviation is reported for the decoupled problem lines.}
    \label{fig:loss}
\end{figure}

\begin{figure}[ht]
    \centering
    \includegraphics[width=0.7\linewidth]{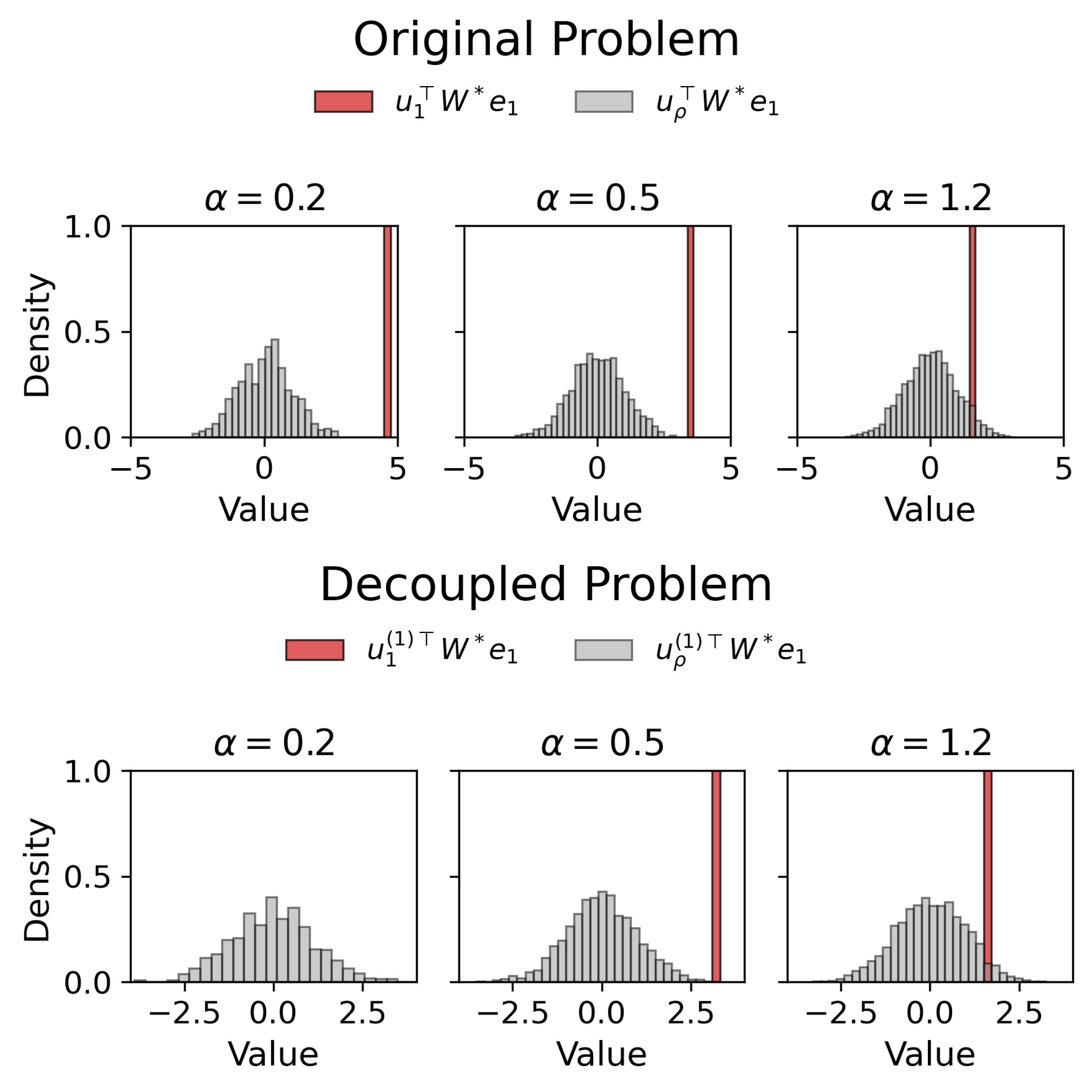}
    \caption{Single-constraint counterpart of \cref{fig:hist}. The constraint is violated whenever the target score (red) fails to exceed the maximum of its competing non-target scores (grey).}
    \label{fig:hist-mu}
\end{figure}

\clearpage
\section{Difficulties in the analysis of the original problem $(\OP)$}
\label{sec:Original-appendix}

Let us consider the case $m = d$ ($\kappa = 1$) for simplicity.
Like eq.~\eqref{eq:volume-DP}, we can associate a volume of solutions 
to the original problem (OP) defined in eq.~\eqref{eq:OP}:
\begin{align}
\label{eq:volume}
    \mcV_{\OP}(E,U) &\coloneqq \int P_d(\rd W)\,\prod_{\mu=1}^p \indi\!\left(\arg \max_{\rho \in [p]} u_\rho^\top W e_{\mu} = \mu\right), \\
    \nonumber
    &= 
    \int P_d(\rd W)\,
    \prod_{\mu=1}^p 
    \prod_{\rho (\neq \mu)}
    \Theta\!\left[
    (u_\mu - u_\rho)^\top W e_\mu
    \right],
\end{align}
where $\Theta(x) \coloneqq \indi\{x \geq 0\}$ is the Heaviside step function, 
and $P_d = \mcN(0, \Id_{d^2}/d)$.
Already, one can see that computing even the first moment of eq.~\eqref{eq:volume} (with respect to the realisation of $(E, U)$) leads to important difficulties.
Recall that the patterns $U = \{u_\mu\}_{\mu=1}^p$ and $\{e_\mu\}_{\mu=1}^p$ are drawn independently from the $d$-dimensional standard Gaussian distribution $\mcN(0, \Id_d)$. We thus have
\begin{equation}
    \EE\mcV_{\OP}(E, U)
    =
    \int P_d(\rd W)\,
    \underbrace{
    \EE_{E, U}
    \left[
    \prod_{\mu=1}^p 
    \prod_{\rho (\neq \mu)}
    \Theta\left(
    (u_\mu - u_\rho)^\top W e_\mu
    \right)
    \right]}_{\eqqcolon J(W)}.
    \label{eq:OP-annealed}
\end{equation}
While in the decoupled problem a simple permutation invariance argument yields $J(W) = e^{-p \log p}$ (see the main text), in the original problem such a simple argument does not apply.
One can notice that
conditionally on $W$ and $U$, the variables
\begin{equation*}
    z_\rho^{\mu} \coloneqq u_\rho^\top W e_\mu
\end{equation*}
are jointly Gaussian with zero mean and covariance
\begin{equation*}
    \EE_{E \mid W,U}[ z_\rho^{\mu} z_\sigma^{\nu}] = \delta_{\mu \nu} u_\rho^\top W W^\top u_\sigma \eqqcolon (\Sigma^{U})_{\rho\sigma} \delta_{\mu \nu}.
\end{equation*}
Thus 
\begin{equation}\label{eq:JW}
    J(W) = 
    \EE_U \prod_{\mu=1}^p 
    \left[
    \EE_{z^{(\mu)} \mid W,U}
    \prod_{\rho \neq \mu} \Theta\left(
    z^{\mu}_\mu - z^{\mu}_\rho\right)
    \right],
\end{equation}
where $z^{(\mu)} \sim \mcN(0,\Sigma^{U})$.
Eq.~\eqref{eq:JW} is the probability that a 
multivariate Gaussian vector with non-trivial covariance falls within a given orthant. 
This does not admit a closed-form expressions for generic covariance matrices, which prevents us from a simple computation even for the first moment of the volume of solutions in the original problem.

\section{Statistical physics analysis of the decoupled problem $(\DP)$}
\label{app:decoupled}

{\color{blue}

}

In this section we compute the asymptotic limit of $\varphi_d$ (called ``free entropy'' in the statistical physics language), as defined in eq.~\eqref{eq:def_fentropy}, using the heuristic replica method. 
We first detail the computation in the case $\kappa = 1$ ($m = d$) for a Gaussian prior $P_d = \mcN(0, \Id_{d^2}/d)$
in~\cref{sec:quenched-computation-DP}, and then extend these results to arbitrary $\kappa \leq 1$ ($m \leq d$), i.e.\ the two-layer linear model, in~\cref{app:rank_prior}).

\subsection{Derivation of \cref{claim:capacity}}
\label{sec:quenched-computation-DP}

Here we present the full derivation supporting \cref{claim:capacity} on the optimal 
storage capacity. Throughout this section, we take $P_d = \mcN(0, \Id_{d^2}/d)$. 
As detailed in the main text, we leverage the replica method from statistical physics~\citep{mezard1987spin,charbonneau2023spin}. It starts from the 
so-called ``replica trick'':
\begin{equation}\label{eq:replica_trick}
    \EE \log \mcV_\DP
    = \left[\frac{\partial}{\partial n} \log \EE[\mcV_\DP^n]\right]_{n = 0}.
\end{equation}
In the replica method, one proceeds to compute the moments $\EE[\mcV_\DP^n]$ for \emph{integer} $n \geq 0$, and then analytically expands the result to all $n > 0$, allowing to take the limit $n \to 0$ in eq.~\eqref{eq:replica_trick}. The replica method is a widely-discussed topic in the statistical physics of learning: 
 we thus refrain from giving a full introduction to its mechanisms, and refer instead the reader to~\cite{Engel_2001} for foundational applications, and to~\cite{montanari2024friendly,maillard2025injectivity} for mathematically-friendly introductions.
 We emphasize that the replica method is inherently non-rigorous, and therefore the following computations are performed at a level of rigour which is standard in the literature on the statistical physics of learning.


\subsubsection{Computation of the moments and Gaussian equivalence}
\label{subsubsec:moments_concentration}

Following the replica method,
we start from the computations of the moments of $\mcV_\DP(E, U)$.
By independence of the constraints, we have the crucial decoupling:
\begin{align}
\label{eq:replicated_VDP_1}
\nonumber
&\EE\mcV^n_{\DP}(E,U) \\
\nonumber
&= \int \prod_{a=1}^n P_d(\rd W^a) \, \prod_{\mu=1}^p 
\left[\EE_{U^{(\mu), e_\mu}}
\prod_{a=1}^n
\prod_{\rho (\neq \mu)} \Theta\left[\frac{1}{\sqrt{d}}
    (u_\mu^{(\mu)} - u_\rho^{(\mu)})^\top W^a e_\mu \right] \right], \\
&= \int \prod_{a=1}^n P_d(\rd W^a)\, 
\left[\underbrace{\EE_{U,e}
\prod_{a=1}^n
\prod_{\rho=2}^p \Theta\left[\frac{1}{\sqrt{d}}
    (u_1 - u_\rho)^\top W^a e \right]}_{\eqqcolon J(\{W^a\})} \right]^p.
\end{align}
%
\textbf{Gaussian universality --}
Let us introduce
\begin{equation}\label{eq:def_zarho}
    z_\rho^a \coloneqq \frac{u_\rho^\top W^{a} e}{\sqrt{d}}.
\end{equation}
Conditionally on $e$ and $(W^a)_{a=1}^n$, $z_\rho^a$ are Gaussian random variables, 
with zero mean and covariance
\begin{equation*}
    \EE_{U|\{W^a\}_{a\in[n]}, e}[z_\rho^a z_\sigma^b] = \delta_{\rho\sigma} \frac{1}{d}e^\top W^a W^{b\top}e\eqqcolon \delta_{\rho\sigma} Q_{ab}^e,
\end{equation*}
With respect to the randomness of $e$, we expect that for generic matrices\footnote{As we detail below, this holds except for matrices $W^a$ with atypically large operator norm with respect to their Euclidean norm (typically low-rank matrices), which this assumption effectively neglects to compute the asymptotics of $\mcV_\DP$.} $(W^a)_{a=1}^n$ the variables $(Q_{ab}^e)_{1 \leq a, b \leq n}$ 
concentrate as $d \to \infty$ around their expectation $Q_{ab} \coloneqq (1/d) \Tr[W^a W^b]$. 
Under this assumption, we thus reach that to leading order
\begin{align}
\label{eq:replicated_VDP_2}
\Phi_d(n)
\coloneqq \frac{1}{d^2} \log \EE [\mcV_{\DP}(E, U)^n]
&= \frac{1}{d^2} \log \int \prod_{a=1}^n P_d(\rd W^a) \, F_{n,p}(Q)^p + \smallO(1),
\end{align}
with 
\begin{equation}\label{eq:def_Q}
    Q_{ab} \coloneqq \frac{1}{d}\Tr[W^a W^{b \T}],
\end{equation}
and where we introduced for a generic symmetric positive semi-definite $Q \in \bbR^{n \times n}$:
\begin{align}
\label{eq:OrthantProbability}
    F_{n,p}(Q) &\coloneqq 
    \EE_{z\sim\mcN(0,\Id_p\otimes Q)}\prod_{a=1}^n
\prod_{\rho=2}^p \Theta\left[
   z_1^a-z_\rho^a \right], \\
    \label{eq:energetic-term}
    &= \EE_{z_1 \sim \mcN(0,Q)} \left[\left(\bbP_{z\sim\mcN(0,Q)}\big\{\forall a\in[n], z^a \leq z^a_1\big\} \right)^{p-1}\right].
\end{align}
The equivalence in high dimension of the law of the random projections $z^a_\rho$ with a Gaussian distribution with same first and second moment structure is a phenomenon known as \emph{Gaussian equivalence} in the statistics literature~\citep{goldt2022gaussian,hu2022universality,montanari2022universality,gerace2024gaussian,wen2025does}.
It has been shown for similar rank-one matrix projections (a transposition of eq.~\eqref{eq:def_zarho} with symmetric rank-one projections) in the context of ellipsoid fitting and of learning in large neural networks~\citep{maillard2023exact,maillard2024bayes,xu2025fundamental}.
At the statistical physics level of rigour, one typically postulates this equivalence as we did above (see e.g.~\cite{maillard2024fitting}). 
Nevertheless, for completeness we sketch in what follows an approach towards a rigorous establishment of this property.

\textbf{Towards a formal establishment of Gaussian equivalence --}
As we detail below, in order to formally control the concentration of $Q^e$, one must require that the matrices over which we integrate in eq.~\eqref{eq:volume-DP} have bounded operator norm $\|W\|_\op$ (i.e.\ largest singular value) as $d~\to~\infty$.
This can be done by a different choice of prior distribution $P_d$, supported on such matrices (as opposed to the Gaussian distribution, which allows arbitrarily large $\|W\|_\op$, albeit with exponentially small probability~\citep{Anderson_Guionnet_Zeitouni_2009}).
We note however that the support of these distributions avoids by definition matrices with very large operator norm (with respect to their Euclidean norm, which can be fixed by scale-invariance of the problem), i.e.\ approximately low-rank matrices.
This is a known difficulty in computing the free entropy of matrix problems, see e.g.\ the example of ellipsoid fitting~\citep{maillard2023exact}.
While we do not expect such low-rank matrices to be solutions of the memorisation problem (both from a mathematical\footnote{In particular we predict that optimal solutions close to the capacity threshold have bounded operator norm, see~\cref{claim:rank}.} and numerical viewpoint), in a mathematically rigorous approach one must rule out their existence with different methods, which we leave to future work focusing on a mathematical establishment of our predictions.

Let us thus assume that we chose such a distribution $P_d$, so that when considering $J(\{W^a\})$, all matrices $W^a$ are such that $\|W^a\|_\op \leq M$, for some fixed $M > 0$.
Recall now the Hanson-Wright inequality (see e.g.~\cite{vershynin2018high}), which we state for Gaussian vectors.
\begin{theorem}[Hanson--Wright Inequality, Gaussian vectors]\label{thm:hanson-wright}
Let $d \geq 1$ and  $X \sim \mcN(0, \Id_d)$.
There exists $C > 0$ such that for all $A \in \bbR^{d \times d}$ and $t > 0$:
\begin{equation*}
\bbP\Bigl( \bigl| X^\top A X - \EE \, X^\top A X \bigr| > t \Bigr)
\le
2 \exp\!\left[
- C \, \min\!\left(
\frac{t^2}{\|A\|_{F}^2},
\frac{t}{\|A\|_\op}
\right)
\right].
\end{equation*}
\end{theorem}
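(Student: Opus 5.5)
The plan is the standard Chernoff/Bernstein argument for sums of independent sub-exponential variables: rotation invariance of the Gaussian law turns the Gaussian case into an explicit computation. First, symmetrise: since $X^\top A X = X^\top S X$ with $S \coloneqq (A+A^\top)/2$, and since $\|S\|_F \le \|A\|_F$ and $\|S\|_\op \le \|A\|_\op$ by the triangle inequality, it suffices to prove the bound for symmetric $S$. Diagonalise $S = O^\top \Lambda O$ with $O$ orthogonal and $\Lambda = \mathrm{diag}(\lambda_1,\dots,\lambda_d)$. Then $g \coloneqq OX \sim \mcN(0,\Id_d)$, and
\begin{equation*}
X^\top A X - \EE\, X^\top A X = \sum_{i=1}^d \lambda_i (g_i^2 - 1),
\end{equation*}
with $\sum_i \lambda_i^2 = \|S\|_F^2$ and $\max_i |\lambda_i| = \|S\|_\op$. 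So we are reduced to a weighted sum of independent centred $\chi^2_1$ variables.

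Next, I would bound the moment generating function of each summand. For $|y| < 1/2$ one has the exact formula $\EE\, e^{y(g^2-1)} = e^{-y}(1-2y)^{-1/2}$. I then want the elementary inequality
\begin{equation*}
-\tfrac12\log(1-2y) - y \le 2y^2 \qquad \text{for } |y|\le 1/4 .
\end{equation*}
This follows from the series $\sum_{k\ge2} (2y)^k/(2k) = y^2 + \tfrac43 y^3 + \dots$, bounded by a geometric series with ratio $2|y| \le 1/2$. This calculus step is the only delicate point, and I expect it to be the main (if mild) obstacle: the constants must be tracked so that the admissible range of $y$ is uniform. Applying it with $y = s\lambda_i$ and using independence gives
\begin{equation*}
\EE \exp\Bigl(s\sum_i \lambda_i(g_i^2-1)\Bigr) \le \exp\bigl(2s^2\|S\|_F^2\bigr) \qquad \text{for } |s| \le \frac{1}{4\|S\|_\op}.
\end{equation*}

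Finally, I would apply Markov's inequality: for $0 < s \le 1/(4\|S\|_\op)$,
\begin{equation*}
\bbP\Bigl(\sum_i \lambda_i(g_i^2-1) > t\Bigr) \le \exp\bigl(-st + 2s^2\|S\|_F^2\bigr).
\end{equation*}
Choose $s = \min\bigl(t/(4\|S\|_F^2),\, 1/(4\|S\|_\op)\bigr)$.
\begin{itemize}
\item If the first term is the minimum, the exponent equals $-t^2/(8\|S\|_F^2)$.
\item Otherwise $2s^2\|S\|_F^2 \le st/2$, and the exponent is at most $-st/2 = -t/(8\|S\|_\op)$.
\end{itemize}
The lower tail follows by replacing $S$ with $-S$, which leaves both norms unchanged. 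A union bound over the two tails then yields the claim with $C = 1/8$, after substituting back $\|S\|_F \le \|A\|_F$ and $\|S\|_\op \le \|A\|_\op$.
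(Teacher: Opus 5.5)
Your proof is correct. The symmetrisation, the rotation $g = OX$, the bound $-\tfrac12\log(1-2y)-y\le y^2/(1-2|y|)\le 2y^2$ for $|y|\le 1/4$, and both cases of your choice of $s$ all check out, giving $C=1/8$.

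The paper gives no proof of this statement. It recalls the result from Vershynin's book, where Hanson--Wright is proved for vectors with independent mean-zero sub-Gaussian coordinates. Without rotation invariance, that proof must handle the diagonal part $\sum_i A_{ii}(X_i^2-\EE X_i^2)$ by Bernstein's inequality. The off-diagonal part needs decoupling and a comparison of moment generating functions with the Gaussian case, and only then is a diagonalisation performed.

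The paper only applies the inequality to $e\sim\mcN(0,\Id_d)$. Your route is therefore a more elementary and self-contained proof of exactly what is needed. It also makes explicit that $C$ does not depend on $d$. The application in \eqref{eq:overlap-concentration}, with $d\to\infty$, implicitly requires this, even though the quantifier order in the statement would allow $C=C(d)$. What the cited general version buys is robustness to non-Gaussian (sub-Gaussian) embeddings, which your diagonalisation step does not survive.

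One small detail to add: if $S=0$ (e.g.\ $A$ antisymmetric), then $X^\top A X\equiv 0$ and the bound is trivial. Set that case aside before dividing by $\|S\|_F$ and $\|S\|_\op$ in your choice of $s$.
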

Applied to the random variables $Q_{ab}^e$, \cref{thm:hanson-wright} yields:
\begin{equation*}
\bbP_e\!\left(\left| Q_{ab}^e - Q_{ab} \right| > t \right)
\leq
2 \exp\!\left[
- C \min\!\left(
\frac{t^2d^2}{\|W^a W^{b\T}\|_F^2},\,
\frac{t d}{\|W^a W^{b\T}\|_\op}
\right)
\right].
\end{equation*}
We can bound the norms:
\begin{equation*}
\begin{dcases}
    \|W^aW^{b\top}\|_\op &\leq M^2, \\
    \|W^aW^{b\top}\|_F &\leq \sqrt{d} \|W^aW^{b\top}\|_\op \leq \sqrt{d} M^2.
\end{dcases}
\end{equation*}
%
Putting all together, we conclude using on top the union bound that for some constant $c = c(M) > 0$:
\begin{equation}
    \label{eq:overlap-concentration}
\bbP_e\!\left(
\max_{1 \le a,b \le n} \bigl| Q^e_{ab} - Q_{ab} \bigr| > t
\right)
\le 2 n^2 \, e^{-c d t}.
\end{equation}
Coming back to eq.~\eqref{eq:replicated_VDP_1}, we have 
\begin{align*}
    J(\{W^a\}) = \EE_e[F_{n,p}(Q^e)],
\end{align*}
and recall the definition of $F_{n,p}(Q)$ in eq.~\eqref{eq:OrthantProbability}.
As we will detail later on (see~\cref{sec:F-computation}), for the matrices $Q$ that dominate the integral in eq.~\eqref{eq:replicated_VDP_2}, $F_{n,p}$ is in the scale $\exp\{\Theta(\log p)\}$, and 
we can define a finite function $G_{n}(Q) \coloneqq \lim_{p \to \infty} \log F_{n,p}(Q) / \log p$.
This separation of scales between the concentration of $Q^e$ (exponentially fast in $d$) and the polynomial scale of $J$ implies that
\begin{equation}\label{eq:limit_J}
    \frac{1}{\log p} \log J(\{W^a\}) = \frac{1}{\log p}\log F_{n,p}(Q) + \smallO(1).
\end{equation}
Indeed, denoting $\varphi_e(Q^e)$ the law of $Q^e$ under the randomness of $Q$ and using Laplace's method, we have:
\begin{align*}
    \frac{1}{\log p} \log J(\{W^a\}) 
    &\simeq \frac{1}{\log p} \int \rd Q^e e^{(\log p) G_{n}(Q^e)} \varphi_e(Q^e), \\ 
    &\simeq \sup_{Q^e}\left[G_n(Q^e) + \lim_{p \to \infty} \frac{1}{\log p} \log \varphi_e(Q^e)\right].
\end{align*}
By eq.~\eqref{eq:overlap-concentration}, we must have since $\log p \ll d$ that 
$\lim_{p \to \infty} \log \varphi_e(Q^e) / \log p = -\infty$ except if $Q^e = Q$, which yields the result.
Eq.~\eqref{eq:limit_J}
then allows to replace $J(\{W^a\})$ by $F_{n,p}(Q)$ to leading order in $\exp\{\Theta(d^2)\}$ in eq.~\eqref{eq:replicated_VDP_1}.
While this argument can be formalised rigorously using standard large deviation theory, our derivation primarily proceeds at the theoretical physics level of rigour 
and we therefore leave a rigorous treatment to a future mathematical work.

\subsubsection{Replica-symmetric assumption}

Going back to eq.~\eqref{eq:replicated_VDP_2} and applying Laplace's method, we get: 
\begin{equation}\label{eq:Phin}
    \lim_{d \to \infty} \Phi_d(n) = \sup_{Q \in \mcS_n^+}\left[J(Q) + \alpha \lim_{p \to \infty} \frac{1}{\log p} \log F_{n,p}(Q) \right],
\end{equation}
where $\mcS_n^+$ represents the set of $n \times n$ symmetric positive semidefinite matrices, and where 
\begin{equation*}
    J(Q) \coloneqq \lim_{d \to \infty} \frac{1}{d^2} \log \int \prod_{a=1}^n \rd W^a \frac{e^{-\frac{\|W^a\|_F^2}{2}}}{(2\pi)^{d^2/2}} \, \prod_{1 \leq a \leq b \leq n} \delta(d^2 Q_{ab} - \Tr[W^a W^{b \T}]),
\end{equation*}
where we rescaled the matrices $W^a$.
Introducing Lagrange multipliers $\hQ_{ab}$, we get:
\begin{align}
\label{eq:JQ}
    J(Q) &= \extr_{\hQ \in \mcS_n}\left[\frac{1}{2} \sum_{a, b} Q_{ab} \hQ_{ab}  
    \right. \\ 
\nonumber
    &\left.+ \lim_{d \to \infty} \frac{1}{d^2} \log \int \prod_{a=1}^n \rd W^a \frac{e^{-\frac{\|W^a\|_F^2}{2}}}{(2\pi)^{d^2/2}} \, e^{-\frac{1}{2} \sum_{1 \leq a,b \leq n}\hQ_{ab} \Tr[W^a W^{b \T}] }\right],
\end{align}
where we wrote the variational problem as an extremum condition over $\hQ$.
By simple Gaussian integration, we obtain: 
\begin{align*}
& \frac{1}{d^2} \log \int \prod_{a=1}^n \rd W^a \frac{e^{-\frac{\|W^a\|_F^2}{2}}}{(2\pi)^{d^2/2}} \, e^{-\frac{1}{2} \sum_{1 \leq a,b \leq n}\hQ_{ab} \Tr[W^a W^{b \T}] }\\
 &= 
 \log \int_{\bbR} \prod_{a=1}^n \frac{\rd z^a}{\sqrt{2\pi}} e^{-\frac{(z^a)^2}{2} -\frac{1}{2} \sum_{1 \leq a,b \leq n}\hQ_{ab} z^a z^b] }, \\
&= -\frac{1}{2} \log \det[\Id_n + \hQ].
\end{align*}
And therefore we have in eq.~\eqref{eq:JQ} that the optimal $\hQ$ satisfies $Q = (\Id_n + \hQ)^{-1}$, and thus 
\begin{equation}\label{eq:JQ_2}
    J(Q) = \frac{n}{2} + \frac{1}{2} \log \det Q - \frac{\Tr(Q)}{2}.
\end{equation}
The replica-symmetric assumption~\citep{mezard1987spin}, justified here by the convexity of the problem~\citep{talagrand2010mean,barbier2022strong}, postulates the form of $(Q_{ab})$ extremizing the variational principle in eq.~\eqref{eq:Phin}. It reads:
\begin{equation}\label{eq:rs_assumption}
    Q_{aa}^{\text{RS}} = Q, \quad Q_{ab}^{\text{RS}} = q, \qquad \forall\, a \neq b,
\end{equation}
for some parameters $0 \leq q < Q$.
Under this assumption, we have from eq.~\eqref{eq:JQ_2}
\begin{equation}\label{eq:JQ_RS}
    J(Q^\RS) = \frac{n(1-Q)}{2} + \frac{n-1}{2} \log (Q-q) + \frac{1}{2} \log[Q + (n-1)q].
\end{equation}

\subsubsection{The limit $n \to 0$ and the final result}

From eq.~\eqref{eq:Phin} and the replica symmetric assumption we finally obtain: 
\begin{align}\label{eq:Phin_2}
    &\lim_{d \to \infty} \Phi_d(n) 
    \\ 
    \nonumber
    &= \sup_{0 \leq q < Q}\left[
    \frac{n(1-Q)}{2} + \frac{n-1}{2} \log (Q-q) + \frac{1}{2} \log[Q + (n-1)q]
    + \alpha \lim_{p \to \infty} \frac{1}{\log p} \log F_{n,p}(Q^\RS) \right].
\end{align}

\textbf{The ``energetic'' term --}
Recall that $F_{n, p}$ was defined in eq.~\eqref{eq:OrthantProbability}:
\begin{equation*}
    F_{n,p}(Q) \coloneqq 
    \EE_{z\sim\mcN(0,\Id_p\otimes Q)}\prod_{a=1}^n
\prod_{\rho=2}^p \Theta\left[
   z_1^a-z_\rho^a \right].
\end{equation*}
For $Q = Q^\RS$ we can simplify this drastically.
Indeed, noting that $(z_\rho^a)$ are zero-mean Gaussian variables with covariance $\EE[z_\rho^a z_\sigma^b] = [(Q-q)\delta_{ab} + q] \delta_{\rho \sigma}$, 
such a correlation structure can be realised by the representation
\begin{equation}\label{eq:scores_eta_x}
z_\rho^a
= \sqrt{q}\,\eta_\rho
+ \sqrt{Q - q}\, x_\rho^a,
\end{equation}
where $\{\eta_\rho\}_{\rho \ge 1}$, $\{x_\rho^a\}_{a, \rho}$, are independent standard Gaussian random variables. 
We then reach, with $t \coloneqq q / Q \in [0,1)$:
\begin{equation}\label{eq:Fnp_RS}
    F_{n,p}(Q^\RS) = G_{n,p}(t) \coloneqq
    \EE_{\eta}\left[\left(\EE_{x}
    \left[\prod_{\rho=2}^p \Theta\left(x_1 - x_\rho + \sqrt{\frac{t}{1-t}}(\eta_1 - \eta_\rho)\right)\right] 
    \right)^n\right].
\end{equation}
In particular we can solve the supremum over $Q$ in eq.~\eqref{eq:Phin_2}:
\begin{align}\label{eq:Phin_3}
\nonumber
     &\lim_{d \to \infty} \Phi_d(n) 
    \\ 
    \nonumber
    &= \sup_{t \in [0,1)} \sup_{Q \geq 0}\left[
    \frac{n(1-Q)}{2} + \frac{n}{2} \log Q + \frac{n-1}{2} \log(1-t) + \frac{1}{2} \log[1 + (n-1)t]
    \right. \\ 
    & \left. \hspace{2cm} + \alpha \lim_{p \to \infty} \frac{1}{\log p} \log G_{n,p}(t) \right], \\ 
    &= 
    \sup_{t \in [0,1)} \left[
   \frac{1}{2} \log[1 + (n-1)t] + \frac{n-1}{2} \log(1-t)
    + \alpha \lim_{p \to \infty} \frac{1}{\log p} \log G_{n,p}(t) \right].
\end{align}
We now take the limit $n \to 0$ in eq.~\eqref{eq:Phin_3}. 
Focusing on the last term and using eq.~\eqref{eq:Fnp_RS}, we get:
\begin{equation}
\lim_{p \to \infty} \frac{1}{\log p} \log G_{n,p}(t)
= n\, \lim_{p \to \infty} \frac{1}{\log p} \EE_{\eta}
\bigl[ \log f(t;\eta) \bigr]
+ O(n^2),
\label{eq:F-intermediate}
\end{equation}
where $\beta_t \coloneqq \sqrt{t / (1-t)}$, $\eta \sim \mcN(0, \Id_p)$, and
\begin{equation}
f_p(t;\eta)
\coloneqq \EE_{x \sim \mcN(0, \Id_p)}
\prod_{\rho = 2}^{p}
\Theta\!\left(x_1 - x_\rho + \beta_t(\eta_1 - \eta_\rho) \right) .
\label{eq:f-def}
\end{equation}
All in all, we reach\footnote{We change back notations from $t$ to $q \in [0,1]$. Notice that the transformation of the extremization over $q$ from a supremum to an infimum in the limit $n \to 0$ is also a prediction of the replica method, see e.g.~\cite{maillard2025injectivity}.}
\begin{align}\label{eq:Phi_1}
     \lim_{d \to \infty} \frac{1}{d^2} \EE \log \mcV_\DP(E, U)
    &= 
    \inf_{q \in [0,1)} \left[\frac{1}{2} \log(1-q) +
   \frac{q}{2(1-q)} 
    + \alpha \lim_{p \to \infty} \frac{\EE_\eta[\log f_{p}(q;\eta)]}{\log p} \right],
\end{align}
which ends the first part of~\cref{claim:capacity}, more precisely the derivation of eq.~\eqref{eq:free-entropy-RS}.

\subsubsection{The capacity threshold}

In order to deduce the capacity threshold from eq.~\eqref{eq:Phi_1}, we analyse when the convex space of solution shrinks, i.e.\ when $q = 1$ becomes the global infimum of the functional.
Let us denote 
\begin{equation}\label{eq:def_Gq}
    G(q) \coloneqq \lim_{p \to \infty} \frac{\EE_\eta[\log f_{p}(q;\eta)]}{\log p}.
\end{equation}
The optimal $q$ in eq.~\eqref{eq:Phi_1} satisfies 
\begin{align*}
    \frac{q}{2(1-q)^2} + \alpha G'(q) = 0.
\end{align*}
Thus we can probe the capacity threshold as the following limit 
\begin{equation}\label{eq:capacity_th}
    \alpha_c^{-1} = \lim_{q \to 1} - 2(1-q)^2 G'(q) =  \lim_{q \to 1} - 2(1-q) G(q),
\end{equation}
using L'Hôpital's Rule.
As we show in~\cref{sec:F-computation}, as $q \to 1$, we have $G(q) \sim - (1-q)^{-1}$, which finally yields the prediction $\alpha_c = 1/2$, and ends the derivation of~\cref{claim:capacity}.

\paragraph{Finite-size considerations --} 
In~\cref{sec:F-computation} we show that the convergence as $p \to \infty$ of $G(q)$ in eq.~\eqref{eq:def_Gq} is very slow, with an error of order $\tmcO(1 / \log p)$ (where $\tmcO$ hides possible multiplicative $\log \log p$ terms). 
In turn, via eq.~\eqref{eq:capacity_th}, we expect the finite-size corrections to the critical threshold to scale as
\begin{equation}\label{eq:alphac_finite_size}
    \alpha_c(p) = \frac{1}{2} + \tmcO\!\left(\frac{1}{\log p}\right),
\end{equation}
To verify eq.~\eqref{eq:alphac_finite_size}, we extract the numerically obtained critical thresholds $\alpha_c(p)$ from \cref{fig:figure-1} (right) and show in \cref{fig:finite-size} that the convergence rate matches our prediction, both for the original and the decoupled problem.

\begin{figure}[h!]
    \centering
    \includegraphics[width=0.5\linewidth]{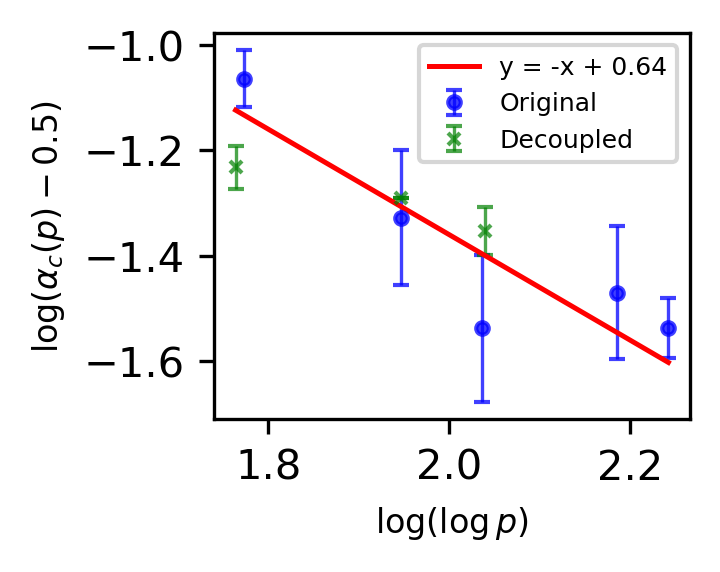}
    \caption{Finite-size scaling of the empirical critical threshold, defined as the largest value of $\alpha$ for which no errors occur. The logarithm of the deviation $\alpha_c(p) - \tfrac{1}{2}$ is plotted against $\log \log p$, exhibiting a linear trend with slope $-1$, consistent with eq.~\eqref{eq:alphac_finite_size}.}
    \label{fig:finite-size}
\end{figure}

%
%
%
\subsection{Expansion of the energetic term}
\label{sec:F-computation}

In this appendix we analyse the function $G(q)$ of eq.~\eqref{eq:def_Gq}, close to $q = 1$. 
Recall 
\begin{equation}\label{eq:def_Gt_fp}
    \begin{dcases}
    G(t) &\coloneqq \lim_{p \to \infty} \frac{\EE_\eta[\log f_{p}(t;\eta)]}{\log p}, \\ 
    f_p(t;\eta)
    &\coloneqq \EE_{x \sim \mcN(0, \Id_p)}
    \prod_{\rho = 2}^{p}
    \Theta\!\left(x_1 - x_\rho + \beta_t(\eta_1 - \eta_\rho) \right) .
    \end{dcases}
\end{equation}
Let us first derive a very simple heuristic analysis of the behaviour of $G(t)$ as $t \uparrow 1$.  This will also give us insight into the mechanism driving this behaviour.
We will then derive mathematically sound lower and upper bounds on $G(t)$ as $t \to 1$, which will validate this heuristic.

\subsubsection{Heuristic derivation and mechanistic insight }\label{subsubsec:heuristic_G}

\textbf{Heuristic derivation --}
Notice that 
\begin{align*}
    G(t) &= \lim_{p \to \infty} \frac{1}{\log p} \EE_\eta\left[\log \bbP_{x \sim \mcN(0, \Id_p)}
    \left(x_1 + \beta_t \eta_1 \geq \max_{\rho \geq 2} [x_\rho + \beta_t \eta_\rho] \right)\right], 
\end{align*}
For $p \to \infty$ and $\beta_t \gg 1$, the random variable 
$\max_{\rho \geq 2} [x_\rho + \beta_t \eta_\rho] \simeq \beta_t \max_{\rho \geq 2} \eta_\rho \simeq \beta_t \sqrt{2 \log p}$, 
by classical results on the maxima of independent Gaussian variables.
We thus approximate in this limit
\begin{align*}
    &\EE_\eta\left[\log \bbP_{x \sim \mcN(0, \Id_p)}\left(x_1 + \beta_t \eta_1 \geq \max_{\rho \geq 2} [x_\rho + \beta_t \eta_\rho] \right)\right]\\ 
    &\simeq 
    \EE_{\eta_1}\left[\log \bbP_{x_1 \sim \mcN(0, 1)}\left(x_1 + \beta_t \eta_1 \geq \beta_t \sqrt{2 \log p} \right)\right], \\ 
    &\asimeq
    \log \bbP_{x_1 \sim \mcN(0,1)}\left(x_1 \geq \beta_t \sqrt{2 \log p} \right),
\end{align*}
where in $(\rm a)$ we use that $\eta_1 \ll \sqrt{2 \log p}$ with high probability.
The value of this last probability is simply a Gaussian tail: 
\begin{equation}\label{eq:Gaussian_tail}
    \log \bbP_{x_1 \sim \mcN(0,1)}\left(x_1 \geq \beta_t \sqrt{2 \log p} \right)
    \simeq - \beta_t^2 \log p,
\end{equation}
and this probability is dominated by the boundary value $x_1 \simeq \beta_t \sqrt{2 \log p}$.
All in all, this simple heuristic predicts (recall $\beta_t = \sqrt{t/(1-t)}$) that:
\begin{equation}\label{eq:G_t1_heuristic}
    G(t) \sim_{t \uparrow 1} -\frac{1}{1-t}.
\end{equation}
\textbf{Insights on the storing mechanism --}
While heuristic, the derivation of eq.~\eqref{eq:G_t1_heuristic} suggests an interesting insight into the mechanism with which the associations are stored.
Recall that one can interpret the variables in eq.~\eqref{eq:def_Gt_fp} as the scores of a fixed pattern association $\mu \in [p]$ (which we took to be $\mu = 1$): 
for $\rho \in [p]$, the score (diagonal if $\rho = 1$, off-diagonal if $\rho \geq 2$) is equal to $z_\rho = x_\rho + \beta_t \eta_\rho$, as was defined in eq.~\eqref{eq:scores_eta_x}. 

The heuristic derivation above (see eq.~\eqref{eq:Gaussian_tail}) suggests that, to leading order, the score $z_1 = x_1  +\beta_t \eta_1$, is dominated by $x_1 \simeq \beta_t \sqrt{2 \log p}$ and concentrates on the deterministic value $\beta_t \sqrt{2 \log p}$, with fluctuations of sub-leading order (e.g.\ of order $\mcO(\beta_t)$ via the fluctuations of $\eta_1$).
This behaviour is validated by our refined analysis in~\cref{subsubsec:math_G}, and is  strikingly different from the one exhibited by the diagonal scores of the Hebbian ansatz, which have strong fluctuations, as we discuss in~\cref{sec_app:hebbian}. In contrast, our derivation suggests that, when memorisation is possible, the diagonal scores of the optimal matrix are, to leading order, concentrated around a deterministic value as $d, p \to \infty$. This prediction is consistent with the observations of~\cref{fig:hist}, where we see that below the capacity threshold the variance of the diagonal scores is much smaller than the one of the off-diagonal scores.

\subsubsection{Mathematical justification}\label{subsubsec:math_G}

We now provide a mathematical derivation of eq.~\eqref{eq:G_t1_heuristic}, using matching upper and lower bounds.
Denoting $\lambda \coloneqq -\eta_1$ and $\xi \coloneqq x_1$, we can write $f_p$ equivalently as: 
\begin{equation}
\label{eq:f_p_eta}
f_p(t;\lambda,\eta) = \EE_{\xi}
\prod_{\rho = 2}^{p}
\Phi\!\left( \xi - \beta_t(\eta_\rho + \lambda) \right),
\end{equation}
where $\Phi$ denotes the standard normal cumulative distribution function and $\eta = (\eta_2, \cdots, \eta_p)$.

\textbf{Lower bound --}
Let us define 
\begin{equation}\label{eq:def_ap}
    a_p
    \coloneqq \max_{\rho \in\{2,\dots,p\}} (\lambda + \eta_\rho)
\end{equation}
It holds that, for all $\rho \in [p]$,
\[
\Phi\!\bigl( \xi - \beta_t(\lambda + \eta_\rho) \bigr)
\geq
\Phi\!\bigl( \xi - \beta_t a_p \bigr) .
\]
In particular,
\begin{equation}
\prod_{\rho=2}^p
\Phi\!\bigl( \xi - \beta_t(\lambda + \eta_\rho) \bigr)
\geq
\Phi\!\bigl( \xi - \beta_t a_p \bigr)^{p-1} .
\end{equation}
Therefore we obtain the lower bound
\begin{equation}
\EE_{\lambda,\eta}
\bigl[ \log f(t;\lambda,\eta) \bigr]
\geq
\EE_{a_p}
\log \!\left[
\EE_{\xi \sim \mcN(0,1)}
\Phi\!\bigl( \xi - \beta_t a_p \bigr)^{p-1}
\right] .
\end{equation}
Let us compute the inner expectation. For any $\varepsilon \in (0,1)$,
\begin{equation}
\begin{aligned}
\EE_{\xi \sim \mcN(0,1)}
\Phi\!\bigl( \xi - \beta_t a_p \bigr)^{p-1}
&\geq
\int_{\beta_t a_p + (1-\varepsilon)a_p}^{\beta_t a_p + (1+\varepsilon)a_p}
d\xi \,
\frac{e^{-\xi^2/2}}{\sqrt{2\pi}}
\, \Phi\!\bigl( \xi - \beta_t a_p \bigr)^{p-1} \\
&=
a_p \int_{1-\varepsilon}^{1+\varepsilon}
dx \,
\frac{e^{-(\beta_t + x)^2 a_p^2/2}}{\sqrt{2\pi}}
\, \Phi(x a_p)^{p-1} .
\end{aligned}
\end{equation}
Since $a_p \sim \sqrt{2 \log p}$, we use the approximation for any $x \in (1-\eps, 1 + \eps)$
\[
\Phi(x a_p)
\approx
1 - \frac{e^{-x^2 a_p^2/2}}{\sqrt{2\pi}\, x a_p}
\gtrsim
1 - p^{-x^2},
\]
which yields
\[
\EE_{\xi \sim \mcN(0,1)}
\Phi\!\bigl( \xi - \beta_t a_p \bigr)^p
\gtrsim
a_p \int_{1-\varepsilon}^{1+\varepsilon}
dx \,
p^{-(\beta_t + x)^2}
\left( 1 - \frac{1}{p^{x^2}} \right)^{p-1} .
\]
Since the integrand is positive, we can lower-bound the integral by restricting 
the integration domain to $[1, 1+\varepsilon]$. On this subinterval, 
$p^{-(\beta_t + x)^2} \geq p^{-(\beta_t + 1 + \varepsilon)^2}$. Furthermore, we 
observe that for any $r \geq 1$:
\[
\lim_{p \to \infty}
\left( 1 - \frac{1}{p^{r}} \right)^{p-1}
=
\begin{cases}
e^{-1}, & r = 1, \\[4pt]
1,      & r > 1,
\end{cases}
\]
so for $x \in [1, 1+\varepsilon]$ the factor $\left(1 - p^{-x^2}\right)^p$ 
is bounded below by $1/e$, which we absorb into the 
$\gtrsim$ symbol. Combining these two bounds yields
\[
\mathbb{E}_{\xi \sim \mathcal{N}(0,1)}\!\left[\Phi(\xi - \beta_t a_p)^p\right]
\;\gtrsim\; \varepsilon\, a_p\, p^{-(\beta_t + 1 + \varepsilon)^2}.
\]
In particular, we are interested in the behaviour as $t \to 1$, where 
$\beta_t = \sqrt{t/(1-t)}$ becomes large. Taking logarithms yields
\begin{equation}
\lim_{p \to \infty} \frac{1}{\log p}\EE_{\lambda,\eta}\bigl[ \log f(t;\lambda,\eta) \bigr]
\geq
-(\beta_t + 1 + \varepsilon)^2, 
\end{equation}
and the convergence occurs at a rate at most $O(\log\log p / \log p)$.
Since $\varepsilon > 0$ is arbitrary, we conclude
\begin{equation*}
G(t) = \lim_{p \to \infty} \frac{1}{\log p}\EE_{\lambda,\eta}\bigl[ \log f(t;\lambda,\eta) \bigr]
\geq
- \left[1 + \sqrt{\frac{t}{1 - t}}\right]^2,
\end{equation*}
In particular, as $t \to 1$, we have 
\begin{equation}\label{eq:lb_G}
    \lim_{t \to 1} (1-t) G(t) \geq -1.
\end{equation}

\paragraph{Upper Bound.}
Fix an integer $k \geq 1$, and let $z_\rho \coloneqq \lambda + \eta_\rho$. 
Since $\Phi \in (0,1)$ and is decreasing,
we can bound the product by:
\begin{equation}
\prod_{\rho=2}^p \Phi\!\bigl(\xi - \beta_t z_\rho\bigr)
\leq
\Phi\!\bigl(\xi - \beta_t a_{p-k}\bigr)^k ,
\end{equation}
where $a_{p-k}$ denotes the $(p-k)$-th order statistic of the variables $\{z_\rho\}$, i.e.
\[
a_1 \leq \cdots \leq a_{p-k} \leq a_p .
\]
This implies the upper bound
\begin{equation}\label{eq:ub_f_1}
\EE_{\lambda,\eta}
\bigl[ \log f(t;\lambda,\eta) \bigr]
\leq
\EE
\log \!\left[
\EE_{\xi \sim \mcN(0,1)}
\Phi\!\bigl(\xi - \beta_t a_{p-k}\bigr)^k
\right] .
\end{equation}
We now tackle the inner expectation. Observe that
\[
\EE_{\xi \sim \mcN(0,1)}
\Phi\!\bigl(\xi - \beta_t a_{p-k}\bigr)^k
=
\EE_{\xi}
\prod_{i=1}^k
\EE_{\xi_i \sim \mcN(0,1)}
\Id\!\bigl( 0 \leq \xi - \xi_i - \beta_t a_{p-k} \bigr).
\]
Introducing the variables $y_i = \xi - \xi_i - \beta_t a_{p-k}$, the vector $y=(y_1,\dots,y_k)$ is (conditionally on $a_{p-k}$) Gaussian with mean
$\EE[y_i] = - \beta_t a_{p-k}$,
and covariance matrix $\Sigma_{ij} = 1 + \delta_{ij}$. Therefore,
\begin{align*}
&\EE_{\xi \sim \mcN(0,1)}
\Phi\!\bigl(\xi - \beta_t a_{p-k}\bigr)^k \\ 
&=
\int_{[0,\infty)^k} d^k y \,
\frac{1}{\sqrt{\det(2\pi \Sigma)}}
\exp\!\left[
-\frac{1}{2}
\bigl(y + \mathbf{1}\beta_t a_{p-k}\bigr)^\top
\Sigma^{-1}
\bigl(y + \mathbf{1}\beta_t a_{p-k}\bigr)
\right] , \\
&=
\frac{e^{-\frac{1}{2}\beta_t^2 a_{p-k}^2 \mathbf{1}^\top \Sigma^{-1}\mathbf{1}}}
{\sqrt{\det(2\pi \Sigma)}}
\int_{[0,\infty)^k} d^k y \,
\exp\!\left[
-\frac{1}{2} y^\top \Sigma^{-1} y
- \beta_t a_{p-k} \mathbf{1}^\top \Sigma^{-1} y
\right] .
\end{align*}
Rescaling $y' = \beta_t a_{p-k} y$, we obtain (we use that $a_{p-k} > 0$ with overwhelming probability for any fixed $k$ as $p \to \infty$)
\begin{align*}
    &\EE_{\xi \sim \mcN(0,1)}
\Phi\!\bigl(\xi - \beta_t a_{p-k}\bigr)^k \\ 
&=
\frac{e^{-\frac{1}{2}\beta_t^2 a_{p-k}^2 \mathbf{1}^\top \Sigma^{-1}\mathbf{1}}}
{(\beta_t a_{p-k})^k \sqrt{\det(2\pi \Sigma)}}
\int_{[0,\infty)^k} d^k y \,
\exp\!\left[
-\frac{1}{2(\beta_t a_{p-k})^2} y^\top \Sigma^{-1} y
- \mathbf{1}^\top \Sigma^{-1} y
\right] .
\end{align*}
In the limit $p \to \infty$, we have $a_{p-k} \sim \sqrt{2 \log p}$ for any fixed $k \geq 1$.
We thus reach in this limit
\begin{align*}
    \int_{[0,\infty)^k} d^k y \,
\exp\!\left[
-\frac{1}{2(\beta_t a_{p-k})^2} y^\top \Sigma^{-1} y
- \mathbf{1}^\top \Sigma^{-1} y
\right] \simeq \int_{[0,\infty)^k} d^k y \,
\exp\!\left[- \mathbf{1}^\top \Sigma^{-1} y
\right]\eqqcolon C_k,
\end{align*}
a constant only depending on $k$.
Using the Sherman--Morrison formula, we get
\[
\Sigma^{-1} = I_k - \frac{1}{k+1}\mathbf{1}\mathbf{1}^\top ,
\]
so that
\[
\mathbf{1}^\top \Sigma^{-1} \mathbf{1} = \frac{k}{k+1}.
\]
Therefore, we reach:
\[
\EE_{\xi \sim \mcN(0,1)}
\Phi\!\bigl(\xi - \beta_t a_{p-k}\bigr)^k
\simeq
\frac{e^{-\frac{k}{2(k+1)} \beta_t^2 a_{p-k}^2}}{(\beta_t a_{p-k})^k} \, \tC_k ,
\]
where $\tC_k$ is a finite constant depending only on $k$.
Since $a_{p-k} \sim \sqrt{2\log p}$ for finite $k \geq 1$, we reach by combining this with eq.~\eqref{eq:ub_f_1}:
\begin{align}\label{eq:ub_G_k}
    G(t) = \lim_{p \to \infty} 
    \frac{1}{\log p} \EE_{\lambda,\eta}
\bigl[ \log f(t;\lambda,\eta) \bigr] \leq - \frac{k}{k+1} \frac{t}{1-t}.
\end{align}
Similarly to the upper bound, the corrections of order $\mcO(\log \log p / \log p)$.
Since eq.~\eqref{eq:ub_G_k} holds for any $k \geq 1$, we get by combining it with eq.~\eqref{eq:lb_G}:
\begin{equation}\label{eq:behavior_G}
    \lim_{t \to 1} (1-t) G(t) = -1.
\end{equation}
\textbf{Slow convergence in $p$ --}
Our derivation suggests that the typical corrections to $G(t)$ for finite $p$ are of order $\mcO(\log \log p / \log p)$. This is validated by the slow convergence that we observe for the capacity threshold, which is compatible with this convergence rate~\cref{fig:finite-size}.

Finally, we emphasize that our approach essentially made mathematically sound the intuition laid out in~\cref{subsubsec:heuristic_G}, by showing that the inner expectation is dominated by values of $\xi$ tightly concentrated around $\beta_t \sqrt{2\log p}$.

\subsection{Statistical physics analysis for rank-constrained weights: 
Derivation of \cref{claim:rank}}
\label{app:rank_prior}

We now consider weight matrices constrained to have rank at most $m = \kappa d$ 
for some $\kappa \in (0, 1]$. We assume that the prior $P_m(\rd W)$
is given by $W \deq QR^\T / \sqrt{dm}$ for $Q, R \in \bbR^{d \times m}$ with i.i.d.\ $\mcN(0, 1)$ elements. Notice in particular that $\EE[\Tr[WW^\T]] = d$, and that the law of $W$ is invariant by left and right orthogonal transformations: in particular
if $\sigma_1, \cdots, \sigma_m$ are the non-zero singular values of $W$, $P_m$ only depends on $W$ through the empirical distribution $\hat{\rho} \coloneqq (1/m) \sum_{i=1}^m \delta_{\sigma_i}$. 
While the precise form of $P_m$ chosen here does not bear importance in the derivation of~\cref{claim:rank}, we will use that the law of $\hat{\rho}$ satisfies, under $P_m$, a large deviations principle in the scale $\mcO(d^2)$, i.e.\ that 
\begin{equation}\label{eq:ldp}
  P_m(\hat{\rho} \simeq \mu) \sim \exp\{-d^2 J_\kappa(\rho)\},
\end{equation}
for some function $J_\kappa$
: this is widely expected to hold for a large class of random matrix ensembles~\citep{Anderson_Guionnet_Zeitouni_2009,Potters_Bouchaud_2020}, and in our heuristic computation we will assume $P_m$ satisfies such a property. 

\subsubsection{The replica method}

The computation of the moments of $\mcV_\DP(E, U)$ in this setting is fairly similar to the one of~\cref{app:decoupled}. In particular, a completely similar Gaussian equivalence phenomenon can be postulated in the high-dimensional limit (see the discussion in~\cref{subsubsec:moments_concentration}), and leads to:
\begin{equation}
\label{eq:replicated_VDP_rk}
\frac{1}{d^2} \log \EE \mcV_\DP(E, U) 
= \frac{1}{d^2} \log \int \prod_{a=1}^n P_m(\rd W^a) \, F_{n,p}(Q)^p + \smallO(1),
\end{equation}
again with 
\begin{equation}
    Q_{ab} \coloneqq \frac{1}{d}\Tr[W^a W^{b \T}].
\end{equation}

\textbf{Non-convexity and the replica-symmetric ansatz --}
Note that the support of the 
rank-constrained prior $P_m(W)$ is non-convex for $\kappa < 1$.
The convexity of the problem was important in justifying the validity of the replica-symmetric assumption.
Here, we nonetheless work under the same assumption: it is generically known that it yields an upper bound on the limiting volume (see e.g.~\citep{maillard2025injectivity}), and its predictions are in close agreement with numerical experiments, as detailed in the main text of this paper.
Likewise, when the rank constraint is enforced through the two-layer 
parametrisation $W = QR^\top$ (as in our experiments), the loss becomes 
non-convex in $(Q, R)$, so gradient-based optimisation is not guaranteed to find 
the global optimum.


One applies then the replica method to eq.~\eqref{eq:replicated_VDP_rk}, in a very similar way to~\cref{app:decoupled}.
The only difference arises from the ``entropic'' term
\begin{equation*}
    J(Q) \coloneqq \lim_{d \to \infty} \frac{1}{d^2} \log \int \prod_{a=1}^n P_m(\rd W^a) \, \prod_{1 \leq a \leq b \leq n} \delta(d^2 Q_{ab} - d \Tr[W^a W^{b \T}]),
\end{equation*}
which under the replica-symmetric assumption reads: 
\begin{align*}
    J(Q,q) &= \extr_{\hQ \in \mcS_n}\left[\frac{1}{2} \sum_{a, b} Q_{ab} \hQ_{ab}  
    + \lim_{d \to \infty} \frac{1}{d^2} \log \int \prod_{a=1}^n P_m(\rd W^a) \, e^{-\frac{d}{2} \sum_{1 \leq a,b \leq n}\hQ_{ab} \Tr[W^a W^{b \T}] }\right], \\ 
    &= 
     \extr_{\hQ, \hq}\left[\frac{n Q \hQ}{2}  - \frac{n(n-1)}{2} q \hq \right. \\
     &
     \left. \qquad + \lim_{d \to \infty} \frac{1}{d^2} \log \int \prod_{a=1}^n P_m(\rd W^a) \, e^{-d\frac{(\hQ+\hq)}{2} \sum_{a=1}^n \|W^a\|_F^2 + d\frac{\hq}{2}\left\|\sum_{a=1}^n W^a\right\|_F^2 }\right],
\end{align*}
where we used the replica-symmetric assumption $\hQ_{ab} = -\hq$, $\hQ_{aa} = \hQ$ for $a \neq b$.
We use the identity:
\[
e^{d\frac{\hat{q}}{2}\left\|\sum_a W^a\right\|_F^2} = 
\int \frac{dZ}{(2\pi/d)^{d^2/2}}\, e^{-\frac{d}{2}\|Z\|_F^2 
+ d \sqrt{\hat{q}}\,\mathrm{Tr}\!\left(Z^\top \sum_a W^a\right)},
\]
This decouples the replicas: each copy of $W^a$ now 
interacts independently with the shared auxiliary field $Z$, and the entropic 
term takes the form
\begin{equation}
J(Q, q) = \extr_{\hQ, \hq}\left[\frac{n Q \hQ}{2}  - \frac{n(n-1)}{2} q \hq  + \lim_{d \to \infty} \frac{1}{d^2} \log \,\EE_{Z \sim \mathrm{Gin}(d)} [I(Z)]^n \right],
\end{equation}
where
$Z \sim \mathrm{Gin}(d)$ denotes a $d \times d$ matrix with i.i.d.\ $\mcN(0, 1/d)$ entries, and
\begin{equation}\label{eq:def_I_rank}
I(Z) \coloneqq \int P_m(\rd W)\, 
e^{-\frac{d}{2}(\hat{Q} + \hat{q})\|W\|_F^2/2 + d \sqrt{\hat{q}}\,\mathrm{Tr}(Z^\top W)}.
\end{equation}
All in all we reach: 
\begin{align}
    \label{eq:replicated_VDP_rk_2}
\lim_{d \to \infty} \frac{1}{d^2} \log \EE \mcV_\DP(E, U)^n 
    &= \extr_{Q, q, \hQ, \hq} \left[\
    \frac{n Q \hQ}{2}  - \frac{n(n-1)}{2} q \hq  \right. \\ 
    \nonumber
    &\left. + \lim_{d \to \infty} \frac{1}{d^2} \log \,\EE_{Z \sim \mathrm{Gin}(d)} [I(Z)]^n + \alpha \lim_{p \to \infty} \frac{\log G_{n,p}(q/Q)}{\log p}\right],
\end{align}
where recall that we defined $G_{n,p}$ in eq.~\eqref{eq:Fnp_RS}.
Taking then the limit $n \downarrow 0$ in the replica method, we get 
a similar expression with respect to eq.~\eqref{eq:Phi_1}, with the first entropic term modified:
\begin{align}\label{eq:RS_formula_rank}
     \lim_{d \to \infty} \frac{1}{d^2} \EE \log \mcV_\DP(E, U)
    = 
    \extr_{Q, q, \hQ, \hq} &\left[
    \frac{Q \hQ}{2} + \frac{q \hq}{2} 
    + \lim_{d \to \infty} \frac{1}{d^2} \,\EE_{Z \sim \mathrm{Gin}(d)} \log[I(Z)]\right.
    \\ 
    \nonumber
    &\left.+ \alpha \lim_{p \to \infty} \frac{\EE_\eta[\log f_{p}(q/Q;\eta)]}{\log p} \right],
\end{align}
We now focus on analysing this entropic term. 
We note that this analysis is very close to the ones performed in~\cite{maillard2024fitting,erba2025bilinear}.

\paragraph{SVD reduction--}
The auxiliary field $Z \sim \mathrm{Gin}(d)$ also enjoys bi-orthogonal invariance, 
sharing the same rotational symmetry as $P_m(W)$. This allows us to integrate out 
the orthogonal degrees of freedom and reduce the problem to one involving only 
the singular values of $W$ and $Z$. We decompose $W$ via its compact SVD: 
$W = U\Sigma V^\top$, where $U, V \in \mathrm{St}(d,m)$ and 
$\Sigma = \mathrm{diag}(\sigma_1,\ldots,\sigma_m)$ with $\sigma_i > 0$. The change of variables 
becomes \citep{Anderson_Guionnet_Zeitouni_2009}:
\begin{equation}
\rd W = c_{d,m}\,|\Delta(s^2)|\,d\mu(U)\,d\mu(V)\prod_{i=1}^m \sigma_i^{d-m}\,d\sigma_i,
\end{equation}
where $d\mu(U)$, $d\mu(V)$ are the Haar measures on $\mathrm{St}(d,m)$, and 
$|\Delta(s^2)| = \prod_{1\leq i<j\leq m}|\sigma_i^2 - \sigma_j^2|$ is the Vandermonde 
determinant. Since $P(W)$ depends only on the singular values and 
$\|W\|_F^2 = \sum_i \sigma_i^2$, the integral $I(Z)$ of eq.~\eqref{eq:def_I_rank} becomes
\begin{equation}
\begin{aligned}
I(Z) =\; & c_{d,m} \int_{\mathbb{R}_+^m} \prod_{i=1}^m d\sigma_i\,\sigma_i^{d-m}\,
|\Delta(\sigma^2)|\,P_m(\Sigma)\,e^{-\frac{d}{2}(\hat{Q}+\hat{q})\sum_i \sigma_i^2} \\
&\times \int_{\mathrm{St}(d,m)} d\mu(U)\int_{\mathrm{St}(d,m)} d\mu(V)\,
\exp\!\Big\{d\sqrt{\hat{q}}\;\mathrm{Tr}\!\big(U^\top ZV\Sigma\big)\Big\}.
\end{aligned}
\end{equation}
%
%
We introduce the empirical singular value density
\[
\hat{\rho}(\sigma) = \frac{1}{m}\sum_{i=1}^m \delta(\sigma - \sigma_i),
\]
and change integration variables from $(\sigma_1,\ldots,\sigma_m)$ to a generic 
probability density $\rho$. All terms in the integrand depend on the singular 
values only through $\hat{\rho}$:
\begin{align}
\begin{dcases}
\frac{1}{2}\sum_{i \neq j} \log|\sigma_i^2 - \sigma_j^2| &= 
\frac{m^2}{2}\dashint \hrho(\sigma)\hrho(\sigma')\log|\sigma^2 - \sigma'^2|\, 
d\sigma\, d\sigma', \\
\sum_{i=1}^m \log \sigma_i &= m \int \hrho(\sigma)\, \log \sigma\, d\sigma, \\
\sum_{i=1}^m \sigma_i^2 &= m \int \hrho(\sigma)\, \sigma^2\, d\sigma.
\end{dcases}
\end{align}
Finally, recall the large deviations assumption stated at the beginning of this section in eq.~\eqref{eq:ldp}.
The change of variables from the $m$-dimensional empirical density to a generic density $\rho$ introduces a Jacobian of order $\exp(O(m)) = \exp(O(d))$, which is subleading with respect to the $\exp(O(d^2))$ terms and can therefore be neglected\footnote{See~\cite{livan2018introduction} for a general justification of this point, and a a similar discussion in~\cite{maillard2024fitting}.}. Using $m = \kappa d$ and performing Laplace's 
method in the space of
probability distributions, we finally reach that 
\begin{equation}
\begin{aligned}
\lim_{d\to \infty} \frac{1}{d^2} \EE_Z \log I(Z) = \sup_{\rho \in \mcM_1^+(\bbR_+)} \Bigg[
& \frac{\kappa^2}{2}\dashint \rho(\sigma)\rho(\sigma')\log|\sigma^2 - \sigma'^2|\, d\sigma\, d\sigma' \\
& + \kappa(1-\kappa) \dashint \rho(\sigma) \log \sigma \, d\sigma \\
& - \frac{\kappa(\hQ+ \hq)}{2} \int \rho(\sigma)\, \sigma^2\, d\sigma \\
& - J_\kappa(\rho)
+ \mathcal{I}_{\mathrm{HCIZ}}(\hat{q};\rho, \rho_z)
\Bigg].
\end{aligned}
\end{equation}
where $\mcM_1^+(\bbR_+)$ denotes probability distributions supported on $\bbR_+$, and we defined the rectangular \emph{HCIZ integral} (see~\cite{Anderson_Guionnet_Zeitouni_2009})
\begin{equation}\label{eq:def_HCIZ}
\mathcal{I}_{\mathrm{HCIZ}}(\hat{q};\rho, \rho_Z) \coloneqq\lim_{d \to \infty} d^{-2}\log \int_{\mathrm{St}(d,m)^2} d\mu(U) d\mu(V)\, \exp\!\Big\{d\sqrt{\hat{q}}\;\mathrm{Tr}\!\big(U^\top Z V \Sigma\big)\Big\},
\end{equation}
which only depends on the asymptotic empirical distribution $\rho_Z$ of the singular values of $Z$. 
The later is known to be the \emph{quarter-circle law}:
\begin{equation}
\rho_z(\sigma) = \rho_{\mathrm{q.c.}}(\sigma) \coloneqq \frac{1}{\pi}\sqrt{4 - \sigma^2}, \qquad \sigma \in [0, 2].
\end{equation}
Note that since $W$ has rank at most $m = \kappa d$, only $m$ singular values are non-zero; the overall singular value distribution of $W$ is therefore
\begin{equation}
\label{eq:W-singval-density}
\rho_W(\sigma) = (1 - \kappa)\,\delta(\sigma) + \kappa\,\rho(\sigma),
\end{equation}
where $\rho$ is the density of the non-zero singular values obtained from the saddle-point equation above.
Overall, we reach the final prediction of the replica method (rescaling $t = q/Q \in [0,1)$):
\begin{align}
\label{eq:variational-rank}
&\lim_{d\to\infty} \frac{1}{d^2}\EE\log \mcV_{\text{DP}}(E, U) \\ 
\nonumber
&= \underset{\substack{Q,\, t,\, \hat{Q},\, \hat{q}}}{\mathrm{extr}} \sup_{\rho \in \mathcal{M}_1^+(\mathbb{R}_+)} \;\left\{
\begin{aligned}
&\frac{Q\hat{Q}}{2} + \frac{1}{2}tQ\hat{q} \\
&+ \frac{\kappa^2}{2}\dashint \rho(\sigma)\rho(\sigma')\log|\sigma^2 - \sigma'^2|\, d\sigma\, d\sigma'\\
&+ \kappa(1-\kappa) \dashint \rho(\sigma)\, \log \sigma\, d\sigma  - J_\kappa(\rho)\\
&- \frac{\kappa}{2}\!\left(\hat{Q} + \hat{q}\right)
\int \rho(\sigma)\, \sigma^2\, d\sigma
+ \mathcal{I}_{\mathrm{HCIZ}}(\hat{q}; \rho, \rho_{\mathrm{q.c.}}) \\
& + \alpha G(t)
\end{aligned}
\right\}.
\end{align}
where recall that we defined 
\begin{equation*}
    G(t) \coloneqq \lim_{p \to \infty} \frac{\EE_\eta[\log f_{p}(t;\eta)]}{\log p}.
\end{equation*}

\subsubsection{Computation of the capacity threshold}

With respect to the non-rank-constrained problem (cf eq.~\eqref{eq:Phi_1}), the difference resides in the ``entropic'' contribution to the formula. While it is an involved term in eq.~\eqref{eq:variational-rank}, we can evaluate it close to the capacity threshold thanks to explicit expansions of the HCIZ integral. Interestingly, this also gives us access to the value of the maximising density $\rho$, corresponding to the asymptotic singular value density of solutions close to the capacity transition.
This derivation is in essence close to ones performed in~\cite{maillard2024fitting,erba2025bilinear}, and we refer to these works for more details on some technical points.
Let us first write the saddle point equations associated to eq.~\eqref{eq:variational-rank}: 
\begin{subequations}
\label{eq:full_saddles_rank}
\begin{empheq}[left = \empheqlbrace]{align}
    & \hat{q} = -\frac{2\alpha}{Q G'(t)}  \\
    & \hat{Q} + t \hq = 0 \\
    & Q = \kappa \int \rho(\sigma) \sigma^2 d\sigma  \\
    & Q(1-t) = 2\frac{\partial}{\partial \hq} I_\HCIZ(\hq;\rho, \rho_{\rm q.c.})  \\
    & \int \rho(\sigma) d\sigma = 1  \\
    & \begin{aligned} 
        &\kappa^2 \dashint \rho(\sigma') \log|\sigma^2 - \sigma'^2| d\sigma' + \kappa(1-\kappa) \log \sigma - \frac{\kappa(\hQ+\hq)}{2} \sigma^2 \\
        &\quad + \frac{\delta}{\delta\rho(\sigma)}\left[I_\HCIZ(\hq;\rho, \rho_{\rm q.c.}) - J_\kappa(\rho)\right] - \lambda_0  = 0
      \end{aligned}
\end{empheq}
\end{subequations}
where the last equation holds for all $\sigma \in \mathrm{supp}(\rho)$, with $\lambda_0$ a Lagrange multiplier enforcing the normalisation of the probability density.

\paragraph{Dilute limit ($\hat{q}\to \infty$)}
As we approach the capacity threshold, we have $t \to 1$. As we have seen in~\cref{sec:F-computation}, in this limit $G'(t) \sim -(1-t)^{-2}$, 
and thus $\hq \sim \hq_0 / (1-t)^2 \to \infty$, with $\hq_0 = 2 \alpha / Q$.
We can exploit this, to compute the leading order of the HCIZ integral when the argument becomes very large. 
Indeed, in this case it is dominated by the maximiser of the integrand in eq.~\eqref{eq:def_HCIZ}, see e.g.~\cite{bun2014instanton}:
\begin{equation}
\mathcal{I}_{\mathrm{HCIZ}}(\hat{q};\rho, \rho_{\mathrm{q.c.}}) \;\sim_{\hq \to \infty}\; d^{-1}\sqrt{\hat{q}}\;\max_{U,V \in \mathrm{St}(d,m)} \mathrm{Tr}\!\big(U^\top Z\, V\, \Sigma\big).
\end{equation}
The maximum is achieved when $U$ and $V$ align the $m$ largest singular values of $Z$ with the $m$ non-zero singular values of $\Sigma$:
\begin{equation}
\max_{U,V \in \mathrm{St}(d,m)} \mathrm{Tr}\!\big(U^\top Z\, V\, \Sigma\big) = \sum_{i=1}^m z_{(i)}^{\downarrow}\;\sigma_{(i)}^{\downarrow} = \sum_{i=1}^d z_{(i)}^{\downarrow}\;\sigma_{W,(i)}^{\downarrow},
\end{equation}
where $z_{(1)}^{\downarrow} \geq \cdots \geq z_{(m)}^{\downarrow}$ are the $m$ largest singular values of $Z$, and $\sigma_{W,(i)}^{\downarrow}$ denotes the $i$-th largest singular value of $W$ (with $\sigma_{W,(i)}^{\downarrow} = 0$ for $i > m$). 
As $d \to \infty$, the leading order of this sum is simply an integral given by the quantile functions of $\rho_W$ and $\rho_{\mathrm{q.c.}}$. To sum up:
\begin{equation}\label{eq:HCIZ-dilute}
\mathcal{I}_{\mathrm{HCIZ}}(\hat{q};\rho, \rho_{\mathrm{q.c.}}) \;\sim_{q\to\infty}\; \sqrt{\hat{q}}\;\int_{1-\kappa}^1 X_{\mathrm{q.c.}}(p)\; X_W(p)\; dp\,
\end{equation}
where $X_W$ is the quantile function of $\rho_W = (1-\kappa)\delta_0 + \kappa \rho$. 
Notice that since the zero singular values of $W$ do not contribute, the integral is restricted to $p \in [1-\kappa, 1]$.

\paragraph{Saddle-point equations close to the SAT/UNSAT transition.}

We can simplify eq.~\eqref{eq:full_saddles_rank} close to the transition, using 
that $G'(t) \sim -(1-t)^2$ as $t \uparrow 1$, and using eq.~\eqref{eq:HCIZ-dilute}. 
Notice that the terms in the last equation of eq.~\eqref{eq:full_saddles_rank} that do not depend on $\hq, \hQ$ become subdominant\footnote{It is at this point in the derivation that the specific choice of the prior $P_m$ becomes irrelevant when considering the capacity threshold.} as $t \to 1$, and we get to leading order:
\begin{subequations}
\label{eq:full_system}
\begin{empheq}[left = \empheqlbrace]{align}
    & \hat{q} \sim \frac{2\alpha}{Q(1-t)^2} \label{eq:q_hat} \\
    & \hat{Q} + \hq = \hQ + t\hq + (1 - t) \hq = (1-t)\hq \sim \frac{2\alpha}{Q(1-t)} \label{eq:Q_hat} \\
    & Q = \kappa \int \rho(\sigma) \sigma^2 d\sigma \label{eq:Q_int} \\
    & Q(1-t) \sim \frac{1}{\sqrt{\hq}} \int_{1-\kappa}^1 X_{\mathrm{q.c.}}(p) X_W(p) dp \label{eq:diff_q} \\
    & \int \rho(\sigma) d\sigma = 1 \label{eq:rho_norm} \\
    & \begin{aligned} 
        &\kappa\frac{\hQ + \hq}{2}\sigma^2 = \sqrt{\hq} \frac{\delta}{\delta\rho(\sigma)}\left[\int_{1-\kappa}^1 X_{\mathrm{q.c.}}(p) X_W(p) dp\right]
      \end{aligned} \label{eq:long_functional}
\end{empheq}
\end{subequations}
Letting 
$\hat{q}_0 \coloneqq \frac{2\alpha}{Q}$, we reach from eq.~\eqref{eq:diff_q}.
\begin{equation*}
\sqrt{\hat{q}_0}\, Q = \int_{1-\kappa}^1 dp\; X_{\mathrm{q.c.}}(p)\, X_W(p).
\end{equation*}
Recall that $X_W(p)$ is the quantile function of the singular value distribution of $W$, i.e.\ the inverse of the cumulative distribution corresponding to the density in \cref{eq:W-singval-density}:
\begin{equation}
F_W(x) = \int_{-\infty}^{x} d\sigma'\, \Big((1-\kappa)\,\delta(\sigma') + \kappa\, \rho(\sigma')\Big),
\end{equation}
For $p \in [1-\kappa, 1]$, differentiating the identity $F_W(X_W(p)) = p$ with respect to $\rho(\sigma)$ gives:
\begin{equation}
\frac{\delta F_W(X_W(p))}{\delta \rho(\sigma)}
= \kappa\, \rho(X_W(p))\, \frac{\delta X_W(p)}{\delta \rho(\sigma)}
+ \kappa\, \mathbf{1}[\sigma \leq X_W(p)] = 0.
\end{equation}
This gives access to the functional derivative of the quantile function:
\begin{equation}
\frac{\delta X_W(p)}{\delta \rho(\sigma)} = -\frac{\mathbf{1}[\sigma \leq X_W(p)]}{\rho(X_W(p))}.
\end{equation}
Substituting into eq.~\eqref{eq:long_functional} and performing the change of variables $p = F_W(u)$ for $p \in (1-\kappa, 1)$, we obtain to leading order as $t \to 1$:
\begin{equation}
\frac{\hat{q}_0}{2}\,\sigma^2
+ \sqrt{\hat{q}_0}\int_{\sigma}^{\max \supp \rho} X_{\mathrm{q.c.}}(F_W(u))\, du = 0,
\end{equation}
which holds for all $\sigma \in \supp(\rho$).
Differentiating with respect to $\sigma$ yields
\begin{equation}
\sqrt{\hat{q}_0}\,\sigma = X_{\mathrm{q.c.}}\!\Big((1-\kappa) + \kappa\, F_\rho(\sigma)\Big),
\end{equation}
from which we read off the cumulative distribution $F_\rho(\sigma)$ of non-zero singular values:
\begin{equation}\label{eq:dist_svalues}
F_\rho(\sigma) = \frac{1}{\kappa}\left(F_{\mathrm{q.c.}}\!\left(\sqrt{\hat{q}_0}\,\sigma\right) - (1-\kappa)\right).
\end{equation}
From eq.~\eqref{eq:dist_svalues}, we can see that $\rho$ is a quarter-circle law
with variance $1/\hq_0$, and truncated from below: it is supported in $(\sigma_{\min}, \sigma_{\max})$, with
\begin{equation}
    \sigma_{\min} = \frac{X_{\qc}(1-\kappa)}{\sqrt{\hq_0}}, \textrm{ and }
    \sigma_{\max} = \frac{2}{\sqrt{\hq_0}},
\end{equation}
and for $\sigma \in (\sigma_{\min}, \sigma_{\max})$ we have
\begin{equation}
    \rho(\sigma) = \frac{\sqrt{\hq_0}}{\kappa} \rho_{\qc}(\sqrt{\hq_0} \sigma).
\end{equation}
Substituting 
the above singular value distribution into eq.~\eqref{eq:Q_int}, we get:
\begin{equation}
Q = \frac{1}{\hq_0}\int_{X_{\mathrm{q.c.}}(1-\kappa)}^{2} \rho_{\mathrm{q.c.}}(\sigma)\, 
\sigma^2\, d\sigma.
\end{equation}
Recalling the definition $\hat{q}_0 = 2\alpha/Q$, we obtain the equation satisfied by the value $\alpha_c(\kappa)$ of the critical threshold:
\begin{equation}
\alpha_c(\kappa) = \frac{1}{2} \int_{X_{\mathrm{q.c.}}(1-\kappa)}^{2} 
\rho_{\mathrm{q.c.}}(\sigma)\, \sigma^2\, d\sigma.
\end{equation}
Notably, $X_{\mathrm{q.c.}}(0) = 0$ and $\int_0^2 \rho_{\mathrm{q.c.}}(\sigma)\,\sigma^2\, d\sigma = 1$, we recover $\alpha_c = \tfrac{1}{2}$ when $\kappa = 1$.

Notice that the equations above do not constrain the value of $Q$, which is the norm of solutions: this is consistent with the scale invariance of the problem, as the set of solutions is always a cone.
\Cref{fig:spectra} compares the singular value spectra (up to a global normalisation) at capacity for the decoupled and original problems, showing excellent agreement between simulations and theoretical predictions and highlighting the equivalence of the two setups.

\paragraph{Comparison with spectrum at initialisation} 
It is instructive to contrast the singular value distribution of the 
optimal weight matrix at the capacity threshold, given by Claim~\ref{claim:rank}, 
with the one at initialisation. The natural baseline depends on the parametrisation. 
For the full-rank case $\kappa = 1$, one typically initialises $W \in \mathbb{R}^{d \times d}$ 
directly with i.i.d.\ Gaussian entries; in this case the empirical singular value 
distribution of $W$ converges to the quarter-circle law $\rho_{\text{q.c.}}$ on $[0, 2]$, 
which already coincides with the spectrum predicted by Claim~\ref{claim:rank} 
at the capacity threshold. This coincidence is, however, specific to $\kappa = 1$: 
for $\kappa < 1$ the rank constraint cannot be enforced through such an initialisation, 
and one resorts instead to the two-layer parametrisation $W = U V^\top $ with 
$U, V \in \mathbb{R}^{d \times m}$, $m = \kappa d$, and i.i.d.\ Gaussian entries. 
Standard results in free probability (see e.g.\ \citet{Potters_Bouchaud_2020}) imply that, in the high-dimensional limit, 
the empirical singular value distribution of $W$ then converges to the free product $\mathrm{MP}_\kappa \boxtimes \mathrm{MP}_\kappa$ of two Marchenko--Pastur laws 
of shape $\kappa$. Equivalently, denoting by $G$ the Stieltjes transform of the law of 
the squared singular values of $W$, one has the cubic equation
\begin{equation}
\kappa^2 x^2\, G(x)^3 + 2\kappa(1-\kappa)\, x\, G(x)^2 
+ \bigl((1-\kappa)^2 - x\bigr) G(x) + 1 = 0,
\end{equation}
from which the bulk density follows by the inverse Stieltjes transformation. 
The spectra at initialisation (\cref{fig:init-spectra}) differ substantially from those 
of trained matrices in the $UV^\top$ parametrisation 
(\cref{fig:spectra}). 

\begin{figure}
    \centering
    \includegraphics[width=1\linewidth]{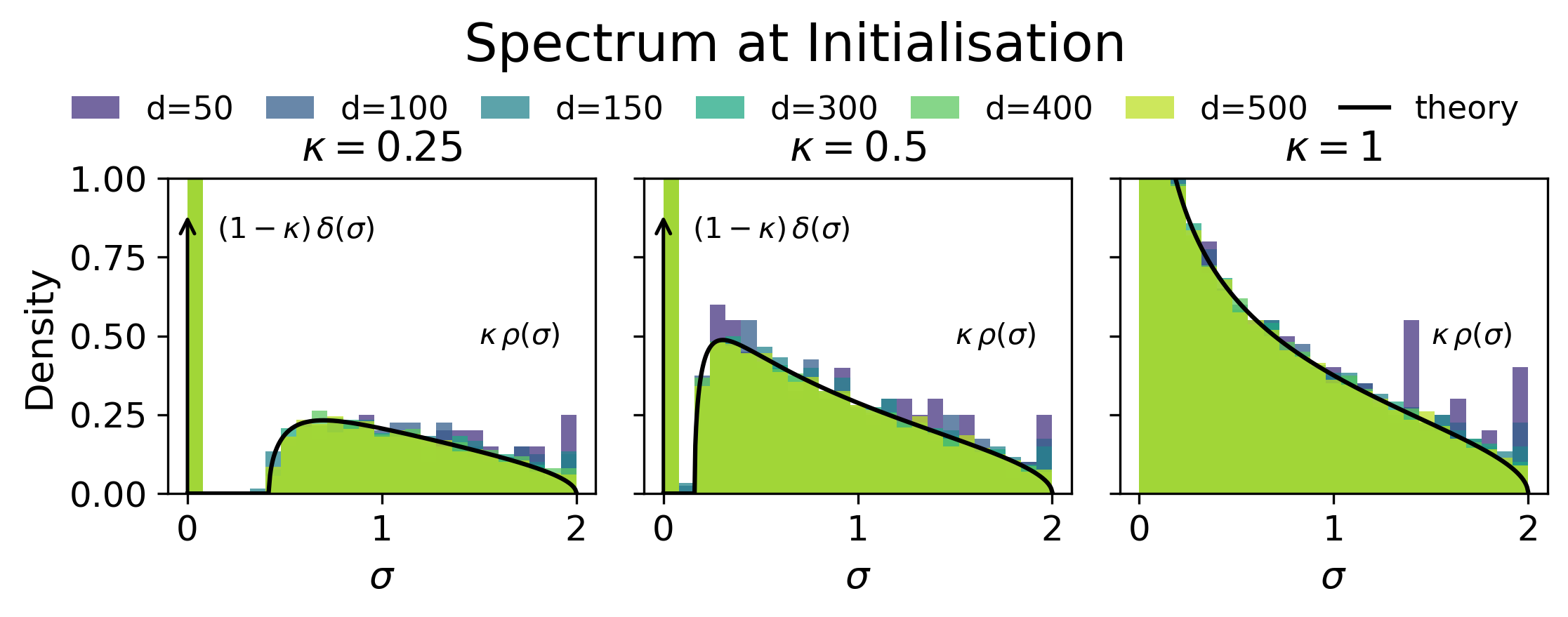}
    \caption{Distribution of singular values of $W = U V^\top$ at initialisation, with $U, V \in \mathbb{R}^{d \times \kappa d}$ having i.i.d.\ Gaussian entries, for $\kappa \in \{0.25, 0.5, 1\}$, rescaled so that the largest singular value equals $2$. Histograms from simulations (shaded) are overlaid with the theoretical prediction (solid line) given by the free multiplicative convolution $\mathrm{MP}_\kappa \boxtimes \mathrm{MP}_\kappa$~\citep{Potters_Bouchaud_2020}. For $\kappa < 1$, a delta peak at zero (arrow) accounts for the fraction $(1-\kappa)$ of vanishing singular values.}
    \label{fig:init-spectra}
\end{figure}
\section{Heuristic derivation of the capacity of the Hebbian ansatz}\label{sec_app:hebbian}

Recall that the (normalised) Hebbian ansatz reads:
\begin{equation}
    W_{\Hebb} \coloneqq \frac{1}{d} \sum_{\mu=1}^p u_\mu e_\mu^\T.
\end{equation}
Let us consider the normalised scores 
\begin{equation}\label{eq:Hebbian_scores}
    s_{\mu \rho} \coloneqq \frac{1}{d} u_\mu^\T W_\Hebb e_\rho.
\end{equation}
For any given $\mu, \rho$, one can easily compute that for $p \gg d$:
\begin{equation}\label{eq:scores_Hebb}
\begin{dcases}
    \EE[s_{\mu \rho}] &= \delta_{\mu \rho}, \\ 
    \Var[s_{\mu \rho}] &= \frac{p}{d^2} (1+ \smallO(1)) \simeq \frac{\alpha}{\log p}.
\end{dcases}
\end{equation}
While the central limit theorem ensures that the marginal distribution of any given score is close to a Gaussian distribution in the high-dimensional limit,
they are correlated with one another.
Nevertheless, a very simple heuristic is to approximate the joint distribution of these scores by independent Gaussians.
Interestingly, this heuristic predicts a threshold for the capacity of this ansatz at $\alpha = 1/8$, which seems compatible with numerical simulations presented in~\cref{fig:accuracy-equivalence}.
We formalise it in the following theorem.
\begin{theorem}\label{thm:Hebbian}
    Let $d, p \geq 1$, $\alpha > 0$, and $(s_{\mu \rho})_{\mu, \rho \in [p]}$ be independent Gaussian random variables, with mean and covariance structure given by:
    \begin{equation}
        \begin{dcases}
            \EE[s_{\mu \rho}] &= \delta_{\mu \rho}, \\ 
            \Var[s_{\mu \rho}] &= \frac{\alpha}{\log p}.
        \end{dcases}
    \end{equation}
    Then, as $d, p \to \infty$ with $(p \log p)/d^2 \to \alpha$:
    \begin{align*}
        \lim_{d \to \infty}
        \bbP[\forall \mu \in [p], \, s_{\mu \mu} \geq \max_{\rho (\neq \mu)} s_{\mu \rho}] &= 
        \begin{dcases}
            1 & \textrm{ if } \alpha < 1/8, \\
            0 & \textrm{ if } \alpha > 1/8.
        \end{dcases}
    \end{align*}
\end{theorem}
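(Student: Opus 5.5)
Rescale by $\sigma_p \coloneqq \sqrt{\alpha/\log p}$ and write $s_{\mu\rho} = \delta_{\mu\rho} + \sigma_p g_{\mu\rho}$ with $(g_{\mu\rho})$ i.i.d.\ standard Gaussians. The event in \cref{thm:Hebbian} becomes
\[
\forall \mu,\quad g_{\mu\mu} + \sigma_p^{-1} \geq M_\mu \coloneqq \max_{\rho\neq\mu} g_{\mu\rho}.
\]
By independence of the rows, the probability factorises as $\pi_p^p$, where
\[
\pi_p \coloneqq \bbP\bigl(g + \sigma_p^{-1} \geq M\bigr),
\]
$g\sim\mcN(0,1)$, and $M$ is the maximum of $p-1$ independent standard Gaussians, independent of $g$. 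Hence it suffices to show that $p(1-\pi_p) \to 0$ when $\alpha<1/8$ and $p(1-\pi_p)\to\infty$ when $\alpha>1/8$, since $\pi_p^p = \exp\{-p(1-\pi_p)(1+\smallO(1))\}$ whenever $1-\pi_p \to 0$. Note that $\sigma_p^{-1} = \sqrt{\log p/\alpha} = c\sqrt{2\log p}$ with $c \coloneqq (2\alpha)^{-1/2}$.

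\textbf{Scale of the failure probability.} A failure requires $M - g \geq c\sqrt{2\log p}$. Writing $g = -x\sqrt{2\log p}$ and $M = y\sqrt{2\log p}$, we have the large-deviation estimates $\bbP(g\le -x\sqrt{2\log p}) = p^{-x^2+\smallO(1)}$ and $\bbP(M \geq y\sqrt{2\log p}) = p^{1-y^2+\smallO(1)}$ for $y>1$, while $M/\sqrt{2\log p}\to 1$ in probability. Thus $1-\pi_p = p^{-I+\smallO(1)}$, where
\[
I = \min_{x+y\ge c}\bigl[x^2 + (y^2-1)_+\bigr]
\]
over $x\ge0$ and $y\ge 1$ (values $y<1$ are no help beyond $y=1$). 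If $c\le 1$, then $I=0$ and failure occurs with probability bounded away from zero. If $c>1$, the minimiser is either $y=1$, $x=c-1$, giving $(c-1)^2$, or an interior point with $x=y$, giving $c^2/2-1$. Since $(c-1)^2 \le c^2/2-1$ is equivalent to $(c-2)^2\le 0$, the value $(c-1)^2$ is the smaller one, so $I=(c-1)^2$. The condition $p(1-\pi_p)\to0$ therefore amounts to $I>1$, i.e.\ $c>2$, i.e.\ $\alpha<1/8$. Conversely, $I<1$, i.e.\ $\alpha>1/8$, gives $p(1-\pi_p) = p^{1-I+\smallO(1)}\to\infty$. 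The borderline $c\in(1,2]$ with $\alpha>1/8$ is covered because $I<1$ there, and $c\le1$ is covered trivially.

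\textbf{Making it rigorous.} For the upper bound when $\alpha<1/8$, fix $\eps>0$ and bound
\[
1-\pi_p \le \bbP\bigl(M\ge(1+\eps)\sqrt{2\log p}\bigr) + \bbP\bigl(g\le -(c-1-\eps)\sqrt{2\log p}\bigr) + R,
\]
where $R$ covers the remaining region $M>(1+\eps)\sqrt{2\log p}$ with $g$ moderately negative. I would handle $R$ by discretising $x$ on a grid and applying the union bound with the Gaussian tail estimates above, which gives $p^{-I+O(\eps)}$. A cleaner alternative is to bound $\bbP(M-g\ge t)$ directly using the tail of $M$ convolved with the Gaussian density of $g$. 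For the lower bound when $\alpha>1/8$, it suffices to exhibit one region: the event $\{M\ge(1-\eps)\sqrt{2\log p}\}\cap\{g\le-(c-1+\eps)\sqrt{2\log p}\}$ has probability $(1-\smallO(1))\,p^{-(c-1+\eps)^2+\smallO(1)}$.

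The main obstacle is the upper bound, which must control all joint configurations of $(g,M)$ uniformly at polynomial precision. This includes showing that the interior saddle never dominates; that was settled above by the inequality $(c-2)^2\ge0$, which is exactly what makes $\alpha=1/8$ the threshold.
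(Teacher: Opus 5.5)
Your overall structure matches the paper: row factorisation, reduction to $p(1-\pi_p)$, a large-deviation exponent at scale $\log p$, and a two-region lower bound for $\alpha>1/8$. That lower-bound half is correct. The error is in the variational step for $\alpha<1/8$. You correctly derive $(c-1)^2\le c^2/2-1 \iff (c-2)^2\le 0$, but this implies the opposite of what you conclude. Since $(c-2)^2\ge 0$, we have $c^2/2-1\le (c-1)^2$, with equality only at $c=2$. The interior point $x=y=c/2$ is admissible exactly when $c\ge 2$. Hence $I=(c-1)^2$ only for $1<c\le 2$, while $I=c^2/2-1=(4\alpha)^{-1}-1$ for $c\ge 2$. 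This is the paper's picture: for $\alpha<1/8$ the global minimiser is the interior point $x_0=1/2$, with $J_\alpha(x_0)=(4\alpha)^{-1}-1$, and for $1/8<\alpha<1/2$ it is the boundary point $x=\sqrt{2\alpha}$, with value $(c-1)^2$. Your threshold comes out right only because both branches equal $1$ at $c=2$. However, the estimate you set out to prove, $1-\pi_p\le p^{-(c-1)^2+O(\varepsilon)}$, is false for $c>2$: at $c=3$ the true exponent is $7/2$, not $4$. Your closing claim that the interior saddle never dominates is also backwards. For $\alpha<1/8$ the dominant failure mode is $g\approx-\tfrac c2\sqrt{2\log p}$, $M\approx\tfrac c2\sqrt{2\log p}$, which is precisely the region you relegate to the remainder $R$.

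Separately, the displayed upper bound cannot give $p(1-\pi_p)\to 0$ as written. Its first term is $\mathbb{P}(M\ge(1+\varepsilon)\sqrt{2\log p})=p^{-2\varepsilon-\varepsilon^2+o(1)}$, which is far larger than $1/p$. That region has to be intersected with the failure event, which is what $R$ is meant to do, rather than bounded on its own. The clean fix is the paper's route: condition on $g$ and apply the union bound to the maximum,
$1-\pi_p\le \mathbb{E}_g\bigl[\min\bigl(1,\,p\,\bar\Phi(g+c\sqrt{2\log p})\bigr)\bigr]$, where $\bar\Phi$ is the standard Gaussian upper tail. Setting $g=-x\sqrt{2\log p}$ and applying Laplace's method gives $1-\pi_p\le p^{-I+o(1)}$ with $I=\min_{0\le x\le c}\bigl[x^2+((c-x)^2-1)_+\bigr]$; the ranges $x<0$ and $x>c$ are dominated. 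This is the correct exponent described above, and it exceeds $1$ if and only if $c>2$, i.e. $\alpha<1/8$.
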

\begin{figure}
    \centering
    \includegraphics[width=1\linewidth]{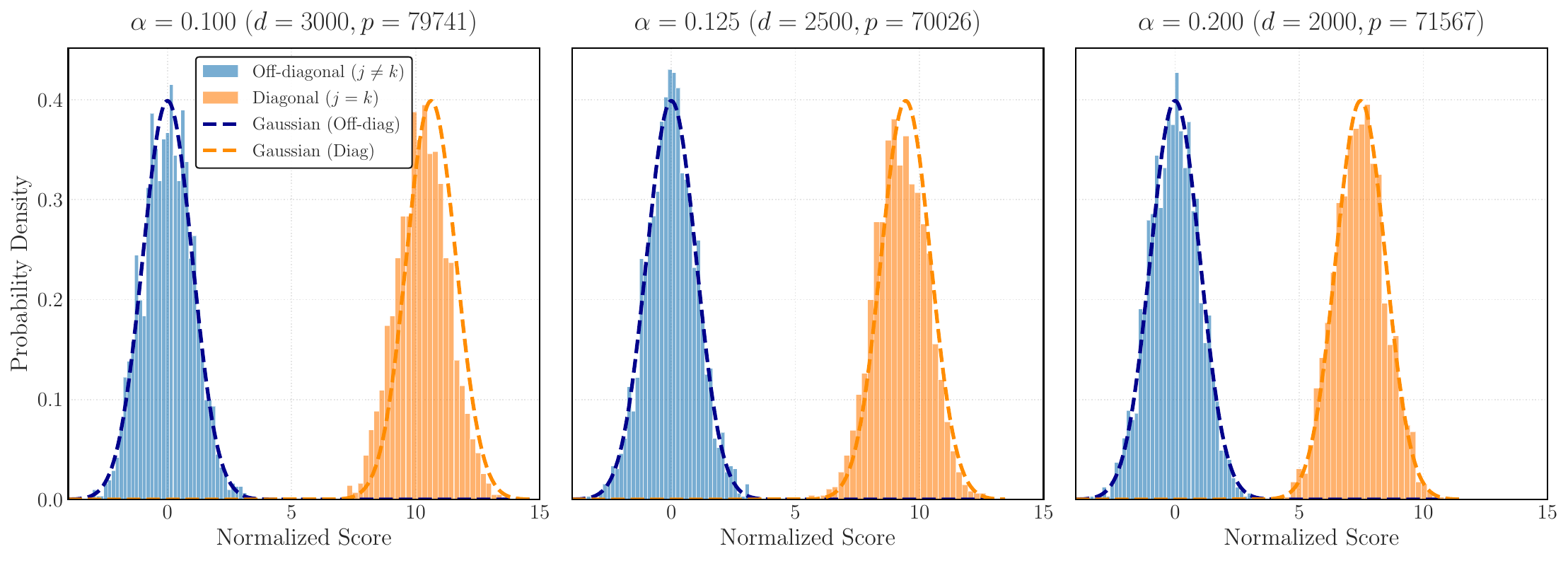}
    \caption{Distribution of the diagonal and off-diagonal scores of eq.~\eqref{eq:Hebbian_scores}. In each figure, we pick randomly pick $N = 2000$ scores among the $p$ ones to draw each histogram. We renormalise the scores so that the off-diagonal scores have variance $1$, similarly to~\cref{fig:hist}.
    We also compare to the Gaussian distributions with theoretical mean and covariance as $d, p \to \infty$.
    }
    \label{fig:hebbian}
\end{figure}

In~\cref{fig:hebbian}, we show the histograms of the (normalised) scores of the Hebbian ansatz, validating the heuristic picture of~\cref{thm:Hebbian}: the diagonal and off-diagonal scores both are approximately distributed as Gaussians. Both have unit variance (once normalised), and whether memorisation is possible or not is determined by the distance of their means, which decreases as $\alpha$ increases\footnote{In eq.~\eqref{eq:scores_Hebb} the mean is always equal to $1$ while the variance varies with $\alpha$, which is another equivalent normalisation choice.}. 
We emphasise that this mechanism is very distinct from the one of the optimal solution as shown in~\cref{fig:hist}: the variance of the diagonal scores is there much smaller than the one of the off-diagonal ones, consistently with our theoretical analysis (see~\cref{sec:F-computation}).

We sketch the proof of~\cref{thm:Hebbian} below.
\begin{proof}[Sketch of proof of~\cref{thm:Hebbian}]
    Let $\sigma_p^2 = \alpha / \log p$.
    Let $Z_\mu \coloneqq \max_{\rho (\neq \mu)} s_{\mu \rho}$ denote the maximum of the off-diagonal elements in the $\mu$-th row. Let $E_\mu$ be the event $\{s_{\mu \mu} \geq Z_\mu\}$. 
    By assumption, the elements $(s_{\mu \rho})$ are independent. Thus, the sets of random variables $\{s_{\mu \rho}\}_{\rho=1}^p$ are mutually independent across different rows $\mu$, which makes the events $E_\mu$ mutually independent. 
    Therefore:
    \begin{equation*}
        \bbP\Big[\forall \mu \in [p], \, s_{\mu \mu} \geq \max_{\rho (\neq \mu)} s_{\mu \rho}\Big] = \bbP\left[\bigcap_{\mu=1}^p E_\mu\right] = \big( 1 - \bbP[E_1^c] \big)^p,
    \end{equation*}
    where $E_1^c = \{s_{11} < Z_1\}$ is the failure event for the first row. 
    Note that as $p \to \infty$, the limit is entirely determined by the asymptotic behaviour of $p \bbP[E_1^c]$: more precisely, if $p \bbP[E_1^c] \to 0$, the probability converges to $1$; if $p \bbP[E_1^c] \to \infty$, the probability converges to $0$.
    We evaluate $\bbP[E_1^c]$ by conditioning on the diagonal signal $s_{11}$:
    \begin{equation}\label{eq:P_E1c}
        \bbP[E_1^c] = \int_{\bbR} \bbP[Z_1 > x] f_{s_{11}}(x) \rd x = 
        \int_{\bbR} \bbP[Z_1 > x] \frac{e^{-\frac{(x-1)^2}{2\sigma_p^2}}}{\sqrt{2\pi \sigma_p^2}} \rd x.
    \end{equation}
    with $f_{s_{11}}$ the distribution function of $s_{11}$.
    Notice that terms in the integral are of the order $\exp\{\Theta(\log p)\}$ since $\sigma_p^2 = \alpha / \log p$. 
    More precisely, we have for any $x \in \bbR$ the following elementary lemma.
    \begin{lemma}\label{lemma:ldp_Z1}
    For $Z_1$ defined as above:
        \begin{equation}\label{eq:ldp_Z1}
       \lim_{p \to \infty} \frac{1}{\log p} \log \bbP[Z_1 > x] =  \min\left(1 - \frac{x^2}{2\alpha}, 0\right).
    \end{equation}
    \end{lemma}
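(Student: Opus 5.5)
We need the large-deviation rate of $Z_1 = \max_{\rho \geq 2} s_{1\rho}$, where the $s_{1\rho}$ are $p-1$ i.i.d.\ $\mcN(0, \sigma_p^2)$ variables with $\sigma_p^2 = \alpha/\log p$. The plan is to sandwich $\bbP[Z_1 > x]$ between the union bound and the exact independent-product formula, and to use the standard Gaussian tail estimate
\[
\bar\Phi(y) \coloneqq \bbP_{g\sim\mcN(0,1)}(g>y) = \exp\{-y^2/2 + \mcO(\log y)\}, \qquad y \to \infty .
\]
With $y = x/\sigma_p = x\sqrt{\log p/\alpha}$ this gives $\bar\Phi(x/\sigma_p) = p^{-x^2/(2\alpha) + \smallO(1)}$ for every fixed $x>0$.

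\textbf{Upper bound.} For $x > 0$, the union bound gives
\[
\bbP[Z_1 > x] \leq (p-1)\,\bar\Phi(x/\sigma_p) = p^{1 - x^2/(2\alpha) + \smallO(1)}.
\]
Combined with the trivial bound $\bbP \leq 1$, this yields
\[
\limsup_{p\to\infty} \frac{1}{\log p}\log \bbP[Z_1>x] \leq \min\left(1 - \frac{x^2}{2\alpha},\, 0\right).
\]
For $x \leq 0$ the right-hand side is $0$, so the bound holds trivially there too.

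\textbf{Lower bound.} By independence,
\[
\bbP[Z_1 > x] = 1 - \bigl(1 - \bar\Phi(x/\sigma_p)\bigr)^{p-1}.
\]
We use the elementary inequality $1-(1-u)^{n} \geq 1 - e^{-nu} \geq \min(nu,\, 1)/2$ for $u\in[0,1]$, and split into cases.

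\begin{itemize}
\item If $x \leq 0$, then $\bar\Phi(x/\sigma_p) \geq 1/2$, so the probability tends to $1$ and the rate is $0$.
\item If $0 < x < \sqrt{2\alpha}$, then $n u = (p-1)\bar\Phi(x/\sigma_p) = p^{1-x^2/(2\alpha)+\smallO(1)} \to \infty$. Hence $\bbP[Z_1 > x] \to 1$ and the rate is $0$.
\item If $x \geq \sqrt{2\alpha}$, then $\bbP[Z_1 > x] \geq \tfrac12 \min\bigl(p^{1-x^2/(2\alpha)+\smallO(1)},\, 1\bigr)$. Taking logarithms and dividing by $\log p$ gives the lower bound $1 - x^2/(2\alpha)$.
\end{itemize}

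The boundary point $x = \sqrt{2\alpha}$ is included in the last case and needs no separate treatment: there the exponent $1 - x^2/(2\alpha) + \smallO(1)$ is $\smallO(1)$, so both bounds give rate $0$. Matching the lower bound with the upper bound proves eq.~\eqref{eq:ldp_Z1}.

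\textbf{Main obstacle.} The only delicate step is controlling the polynomial prefactor $\mcO(\log y)$ in the Gaussian tail, with $y \asymp \sqrt{\log p}$. It contributes only $\mcO(\log\log p)$ to $\log \bbP$, which is $\smallO(\log p)$. The Mills-ratio bounds
\[
\frac{y}{1+y^2}\,\frac{e^{-y^2/2}}{\sqrt{2\pi}} \;\leq\; \bar\Phi(y) \;\leq\; \frac{1}{y}\,\frac{e^{-y^2/2}}{\sqrt{2\pi}}
\]
make this rigorous.
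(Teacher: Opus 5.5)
Your route is the paper's route. You use the union bound with Mills-ratio asymptotics for the upper bound, and the exact formula $1-(1-\bar\Phi(x/\sigma_p))^{p-1}$ for the lower bound, split at $x=0$ and $x=\sqrt{2\alpha}$. Your single inequality $1-e^{-y}\ge \min(y,1)/2$ packages the paper's two estimates: $1-e^{-y}\ge y-y^2/2$ when $(p-1)q_p(x)\to 0$, and $\mathbb{P}[Z_1\le x]\le e^{-(p-1)q_p(x)}$ when it diverges. For $x\ge 0$ your argument is correct.

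There is one false step. The sentence ``for $x\le 0$ the right-hand side is $0$'' is wrong: $\min(1-x^2/(2\alpha),0)=1-x^2/(2\alpha)<0$ whenever $x<-\sqrt{2\alpha}$. In that range the lemma as written is false. Your own lower-bound case shows $\mathbb{P}[Z_1>x]\to 1$ for every $x\le 0$, so the true rate is $0$, not $1-x^2/(2\alpha)$; your claimed $\limsup$ bound therefore fails there. The paper's proof has the same blind spot: its upper bound uses the tail expansion containing $\log(x/\sqrt{\alpha})$, valid only for $x>0$, while its lower bound gives rate $0$ for all $x\le 0$.

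What you (and the paper) actually prove is the rate $\min\bigl(1-(\max(x,0))^2/(2\alpha),\,0\bigr)$. This agrees with the stated formula exactly when $x\ge -\sqrt{2\alpha}$. State the lemma in that form, or restrict it to $x\ge 0$. Nothing downstream changes: in the Laplace-method step for the Hebbian threshold, the objective $1+r(x)-(x-1)^2/(2\alpha)$ is maximised at some $x\ge 0$, so the negative-$x$ region never matters.
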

    \cref{lemma:ldp_Z1} is shown later on.
    Using Laplace's method in eq.~\eqref{eq:P_E1c} we get:
    \begin{equation}\label{eq:Laplace_E1}
        \lim_{p \to \infty} \frac{\log(p \bbP[E_1^c])}{\log p} 
        = \sup_{x \in \bbR}\left[1+ \min\left(1 - \frac{x^2}{2\alpha}, 0\right) - \frac{(x-1)^2}{2 \alpha}\right] \eqqcolon - \inf_{x \in \bbR} [J_\alpha(x) - 1].
    \end{equation}
    One can easily evaluate the right-hand-side of eq.~\eqref{eq:Laplace_E1}.
    Notice that the rate function $J_\alpha(x)$ has two regions.
    
    \textbf{Region 1} ($x \leq \sqrt{2\alpha}$):
    Here, $J_\alpha(x) = (x-1)^2/2\alpha$.
    Its minimum value is either $0$ if $1 \leq \sqrt{2\alpha}$, or 
    $(\sqrt{2\alpha}-1)^2/(2\alpha)$ otherwise.

    \noindent\textbf{Region 2} ($x > \sqrt{2\alpha}$):
    Here
    \begin{equation*}
        J_\alpha(x) = \frac{(x-1)^2}{2\alpha} + \frac{x^2}{2\alpha} - 1 = \frac{2x^2 - 2x + 1}{2\alpha} - 1.
    \end{equation*}
    $J_\alpha'(x) = (4x - 2)/(2\alpha)$ has a zero at $x_0 = 1/2$, with $J_\alpha(1/2) = (4\alpha)^{-1} - 1$.
    
    We can then separate cases.

    \noindent\textbf{Case 1} ($\alpha < 1/8$): In this regime, $\sqrt{2\alpha} < 1/2$, 
    and $(4\alpha)^{-1} - 1 \leq (\sqrt{2\alpha}-1)^2/(2\alpha)$.
    Then $x_0 = 1/2$ is the global minimum, 
    and $J_\alpha(x_0) = (4\alpha)^{-1} - 1 > 1$. 
    Therefore by eq.~\eqref{eq:Laplace_E1}, we have 
    \begin{equation*}
        \lim_{p \to \infty} \frac{\log(p \bbP[E_1^c])}{\log p} < 0,
    \end{equation*}
    and thus $p \bbP[E_1^c] \to 0$, which implies the sought result.

    \noindent\textbf{Case 2} ($\alpha > 1/2$):
    Then the analysis of Region~1 shows that $\inf_x J_\alpha(x) = 0$, which implies that $p \bbP[E_1^c] \to \infty$, and thus the sought result.

    \noindent\textbf{Case 3} ($1/2 > \alpha > 1/8$):
    Here, $1 > \sqrt{2\alpha} > 1/2$.
    $J_\alpha$ is here decreasing strictly in $(0, \sqrt{2\alpha})$ (Region~1) and strictly increasing in $(\sqrt{2\alpha}, \infty)$ (Region~2).
    By continuity, its global minimum is reached in $x = \sqrt{2\alpha}$, with value 
    \begin{equation*}
        \frac{(\sqrt{2\alpha} - 1)^2}{2\alpha} > 1 \iff 2\alpha - 2\sqrt{2\alpha} + 1 > 2\alpha \iff 1 > 2\sqrt{2\alpha} \iff \alpha < \frac{1}{8}.
    \end{equation*}
    Therefore the minimum value of $J_\alpha$ is $\inf_x J_\alpha(x) < 1$, and we reach from eq.~\eqref{eq:Laplace_E1} that $p \bbP[E_1^c] \to \infty$, ending the proof of~\cref{thm:Hebbian}.

    It simply remains to show~\cref{lemma:ldp_Z1}.
\end{proof}

\begin{proof}[Proof of~\cref{lemma:ldp_Z1}]
Let $q_p(x) = \bbP[s_{1 \rho} > x]$ for $\rho \geq 2$. 
Denoting $\Phi(t) \coloneqq (1/\sqrt{2\pi}) \int_t^\infty e^{-u^2/2} \rd u$, we have:
    \begin{equation*}
        q_p(x)  = \Phi\left( x \sqrt{\frac{\log p}{\alpha}} \right).
    \end{equation*}
    For any $t > 0$, the standard Gaussian tail is bounded by:
    \begin{equation*}
        \frac{t}{1+t^2} \frac{1}{\sqrt{2\pi}} e^{-t^2/2} \le \Phi(t) \le \frac{1}{t \sqrt{2\pi}} e^{-t^2/2}.
    \end{equation*}
    Therefore:
    \begin{equation*}
        \log \Phi(t) = -\frac{t^2}{2} - \log t - \frac{1}{2} \log(2\pi) + O(t^{-2}).
    \end{equation*}
    Substituting $t = x \sqrt{\log p / \alpha}$, the logarithmic behaviour of the marginal tail is:
    \begin{equation}\label{eq:log_qp}
        \log q_p(x) = -\frac{x^2}{2\alpha} \log p - \frac{1}{2} \log(\log p) - \log\left( \frac{x}{\sqrt{\alpha}} \right) - \frac{1}{2}\log(2\pi) + o(1).
    \end{equation}
     Because $Z_1$ is the maximum of $p-1$ independent and identically distributed variables, we have exactly:
    \begin{equation}\label{eq:Z1_exact}
        \bbP[Z_1 > x] = 1 - \big(1 - q_p(x)\big)^{p-1}.
    \end{equation}
    We proceed by establishing matching upper and lower bounds for the limit.
    
    \noindent\textbf{Upper Bound:}
    By the union bound:
    \begin{equation*}
        \bbP[Z_1 > x] \le (p-1) q_p(x) < p \, q_p(x).
    \end{equation*}
    Therefore, using eq.~\eqref{eq:log_qp}:
   \begin{align*}
        \frac{1}{\log p} \log \bbP[Z_1 > x] &\le \frac{1}{\log p} \big( \log p + \log q_p(x) \big) \\
        &= 1 - \frac{x^2}{2\alpha} - \frac{\log(\log p)}{2 \log p} + O\left(\frac{1}{\log p}\right).
    \end{align*}
    Taking the limit as $p \to \infty$, we reach (notice that the limit must be trivially non-positive):
    \begin{equation}\label{eq:upper_bound}
        \limsup_{p \to \infty} \frac{1}{\log p} \log \bbP[Z_1 > x] \le \min \left(1 - \frac{x^2}{2\alpha},0\right).
    \end{equation}

    \noindent\textbf{Lower Bound:}
    Let us assume first $x \ge \sqrt{2\alpha}$. 
    From eq.~\eqref{eq:log_qp} we have:
    \begin{equation*}
        (p-1)q_p(x) \asymp p^{1 - \frac{x^2}{2\alpha}} (\log p)^{-1/2}.
    \end{equation*}
    This implies that $(p-1)q_p(x) \to 0$ as $p \to \infty$.
    For any $y > 0$, standard inequalities give $1 - e^{-y} \ge y - y^2/2$. Using the fact that $(1 - q)^{p-1} \le e^{-(p-1)q}$, we have from eq.~\eqref{eq:Z1_exact}:
    \begin{equation*}
        \bbP[Z_1 > x] \ge 1 - e^{-(p-1)q_p(x)} \ge (p-1)q_p(x) \left( 1 - \frac{(p-1)q_p(x)}{2} \right).
    \end{equation*}
    Taking the logarithm:
    \begin{equation*}
        \log \bbP[Z_1 > x] \ge \log(p-1) + \log q_p(x) + \log\left( 1 - \frac{(p-1)q_p(x)}{2} \right).
    \end{equation*}
    Because $(p-1)q_p(x) \to 0$, the last term converges to $0$. Dividing by $\log p$ gives:
    \begin{align*}
        \frac{1}{\log p} \log \bbP[Z_1 > x] &\ge \frac{\log(p-1)}{\log p} + \frac{\log q_p(x)}{\log p} + o(1)
= 1 - \frac{x^2}{2\alpha} + o(1),
    \end{align*}
    where we used eq.~\eqref{eq:log_qp}.
    Taking the limit as $p \to \infty$ yields :
    \begin{equation}\label{eq:lower_bound}
        \liminf_{p \to \infty} \frac{1}{\log p} \log \bbP[Z_1 > x] \ge 1 - \frac{x^2}{2\alpha}.
    \end{equation}
    It remains to consider the case $x < \sqrt{2\alpha}$.
    We will show that $\bbP[Z_1 > x] \to 1$ in this case.
    Using the standard analytic inequality $1 - y \le e^{-y}$ for all $y \in \bbR$, we obtain:
    \begin{equation}\label{eq:failure_bound}
        1 - \bbP[Z_1 > x] = \bbP[Z_1 \le x] \le \exp\big( -(p-1)q_p(x) \big).
    \end{equation}
    If $x \le 0$, then $q_p(x) \ge 1/2$. Consequently, $(p-1)q_p(x) \to \infty$ trivially as $p \to \infty$. 
    If $0 < x < \sqrt{2\alpha}$, we exponentiate the precise tail expansion from eq.~\eqref{eq:log_qp} to find:
    \begin{equation*}
        q_p(x) = \exp\left( -\frac{x^2}{2\alpha} \log p - \frac{1}{2} \log(\log p) + O(1) \right) \asymp \frac{1}{\sqrt{\log p}} p^{-\frac{x^2}{2\alpha}}.
    \end{equation*}
    Therefore
    \begin{equation*}
        (p-1)q_p(x) \asymp \frac{1}{\sqrt{\log p}} p^{1 - \frac{x^2}{2\alpha}} \to \infty,
    \end{equation*}
    since $1 - x^2/(2\alpha) > 0$.
    In all cases, this shows from eq.~\eqref{eq:failure_bound} that 
    $\lim_{p \to \infty} \bbP[Z_1 > x] = 1$, 
    and in particular
    \begin{equation*}
        \liminf_{p \to \infty} \frac{1}{\log p} \log \bbP[Z_1 > x] = \lim_{p \to \infty} \frac{o(1)}{\log p} = 0.
    \end{equation*}
\end{proof}

\end{document}